\documentclass{article}

\usepackage{microtype}
\usepackage{graphicx}
\usepackage{subfigure}
\usepackage{subcaption}
\usepackage{booktabs} % for professional tables

\usepackage[utf8]{inputenc}
\usepackage{cuted}
\usepackage{tabularx}

\usepackage{hyperref}

\usepackage[preprint]{icml2026}

\usepackage{amsmath}
\usepackage{amssymb}
\usepackage{mathtools}
\usepackage{amsthm}
\usepackage[linesnumbered,ruled,vlined,algo2e]{algorithm2e}
\DeclareMathOperator*{\argmax}{arg\,max}

\usepackage[capitalize,noabbrev]{cleveref}

\theoremstyle{plain}
\newtheorem{theorem}{Theorem}[section]
\newtheorem{proposition}[theorem]{Proposition}
\newtheorem{lemma}[theorem]{Lemma}
\newtheorem{corollary}[theorem]{Corollary}
\theoremstyle{definition}
\newtheorem{definition}[theorem]{Definition}
\newtheorem{assumption}[theorem]{Assumption}
\theoremstyle{remark}
\newtheorem{remark}[theorem]{Remark}

\usepackage{multirow}
\usepackage{xspace}
\usepackage{booktabs}
\usepackage{soul}
\usepackage[capitalize,noabbrev]{cleveref}
\usepackage{url}
\usepackage{breakurl}
\usepackage{dblfloatfix}

\usepackage{wrapfig, lipsum}
\usepackage{bussproofs}
\usepackage{scalerel}
\usepackage{graphicx}
\newcommand{\nback}[1][-.95pt]{
  \mathrel{\raisebox{#1}{$\rotatebox[origin=c]{-315}{\scaleobj{0.55}{-}}$}}
}
\newcommand{\undernegpreccurlyeq}{%
\mathrel{\ooalign{$\preccurlyeq$\cr\kern1.2pt$\nback$}}}
\usepackage{subcaption}

\usepackage[textsize=tiny]{todonotes}

\icmltitlerunning{Beyond Message Passing: A Symbolic Alternative for Expressive and Interpretable Graph Learning}

\begin{document}

\twocolumn[
\icmltitle{Beyond Message Passing: A Symbolic Alternative for Expressive and Interpretable Graph Learning}

  % It is OKAY to include author information, even for blind submissions: the
  % style file will automatically remove it for you unless you've provided
  % the [accepted] option to the icml2026 package.

  % List of affiliations: The first argument should be a (short) identifier you
  % will use later to specify author affiliations Academic affiliations
  % should list Department, University, City, Region, Country Industry
  % affiliations should list Company, City, Region, Country

  % You can specify symbols, otherwise they are numbered in order. Ideally, you
  % should not use this facility. Affiliations will be numbered in order of
  % appearance and this is the preferred way.
  \icmlsetsymbol{equal}{*}

  \begin{icmlauthorlist}
    \icmlauthor{Chuqin Geng}{McGill,UofT}
    \icmlauthor{Li Zhang}{UofT}
    \icmlauthor{Haolin Ye}{McGill}
    \icmlauthor{Ziyu Zhao}{McGill}
    \icmlauthor{Yuhe Jiang}{UofT}
    \icmlauthor{Tara Saba}{UofT}
    \icmlauthor{Xinyu Wang}{McGill}
    \icmlauthor{Xujie Si}{UofT}
    
  \end{icmlauthorlist}

  \icmlaffiliation{McGill}{
    School of Computer Science, McGill University, Montreal, Canada
  }

  \icmlaffiliation{UofT}{
    Department of Computer Science, University of Toronto, Toronto, Canada
  }

  \icmlcorrespondingauthor{Chuqin Geng}{chuqin.geng@mail.mcgill.ca}
  \icmlcorrespondingauthor{Xujie Si}
  {six@cs.toronto.edu}
  % You may provide any keywords that you find helpful for describing your
  % paper; these are used to populate the "keywords" metadata in the PDF but
  % will not be shown in the document
  \icmlkeywords{Machine Learning, ICML}

  \vskip 0.3in
]

% this must go after the closing bracket ] following \twocolumn[ ...

% This command actually creates the footnote in the first column listing the
% affiliations and the copyright notice. The command takes one argument, which
% is text to display at the start of the footnote. The \icmlEqualContribution
% command is standard text for equal contribution. Remove it (just {}) if you
% do not need this facility.

% Use ONE of the following lines. DO NOT remove the command.
% If you have no special notice, KEEP empty braces:
% \printAffiliationsAndNotice{}  % no special notice (required even if empty)
% Or, if applicable, use the standard equal contribution text:
\printAffiliationsAndNotice{}

\begin{abstract}
Graph Neural Networks (GNNs) have become essential in high-stakes domains such as drug discovery, yet their black-box nature remains a significant barrier to trustworthiness. While self-explainable GNNs attempt to bridge this gap, they often rely on standard message-passing backbones that inherit fundamental limitations, including the 1-Weisfeiler-Lehman (1-WL) expressivity barrier and a lack of fine-grained interpretability. To address these challenges, we propose \textsc{SymGraph}, a symbolic framework designed to transcend these constraints. By replacing continuous message passing with discrete structural hashing and topological role-based aggregation, our architecture theoretically surpasses the 1-WL barrier, achieving superior expressiveness without the overhead of differentiable optimization. Extensive empirical evaluations demonstrate that \textsc{SymGraph} achieves state-of-the-art performance, outperforming existing self-explainable GNNs. Notably, \textsc{SymGraph} delivers $10\times$ to $100\times$ speedups in training time using only CPU execution. Furthermore, \textsc{SymGraph} generates rules with superior semantic granularity compared to existing rule-based methods, offering great potential for scientific discovery and explainable AI.
\end{abstract}

\section{Introduction}
\label{sec:intro}
Graph Neural Networks (GNNs) have emerged as the de facto standard for modeling graph-structured data, achieving remarkable performance in domains from drug discovery \cite{xiong2021,liu2022} to fraud detection \cite{rao2021}. Despite their success, GNNs encode structural patterns into numerical representations, making it challenging to understand the rationale behind their predictions.

% To address this opacity, the field of Explainable AI (XAI) is witnessing a fundamental shift in two dimensions: a move from \emph{post-hoc} to \emph{self-explainable} architectures, and a transition from \emph{attribution} to \emph{logical reasoning}. While early methods focused on post-hoc explainers \cite{GNNExplainer,PGExplainer} or identifying influential motifs \cite{pope2019,xgnn}, these approaches face a critical semantic gap: they merely isolate salient substructures without explicating the compositional rules that determine their importance. Consequently, the frontier of XAI has pivoted toward \emph{logic-based self-explainable models} \cite{DBLP:journals/inffus/SchnakeJLXNGMM25}, which aim to formulate explicit logical rules that transparently reveal \textit{how} the decision is made.

To address this opacity, the field of Explainable AI (XAI) is witnessing a fundamental shift in two dimensions: a move from \emph{post-hoc} to \emph{self-explainable} architectures, and a transition from \emph{attribution} to \emph{logical reasoning}. While early methods focused on post-hoc explainers to highlight salient graph elements \cite{pope2019,GNNExplainer} or discover influential motifs \cite{xgnn,PGExplainer}, these approaches face dual limitations. First, being post-hoc approximations, they inherently lack faithfulness, often failing to accurately reflect the underlying decision process of the black-box model. Second, they face a critical semantic gap: they merely isolate salient substructures without explicating the compositional rules that determine their importance. Consequently, the frontier of XAI has pivoted toward \emph{logic-based, self-explainable models} \cite{DBLP:journals/inffus/SchnakeJLXNGMM25}, which aim to formulate explicit logical rules that transparently reveal \textit{how} the decision is made.

A prominent line of research, exemplified by logical frameworks like \textsc{LogiX-GIN} \cite{LogiXGIN}, seeks to achieve intrinsic interpretability by imposing structural constraints directly into the GNN backbone. However, despite these advancements, we identify two critical limitations that restrict both the theoretical power and practical utility of the entire class of self-explainable GNNs (SE-GNNs) \cite{pignn,gib,kergnn}. First, by relying on continuous GNN backbones, these approaches suffer from a structural bottleneck. They inevitably inherit the theoretical ceiling of the Weisfeiler-Lehman (1-WL) test \cite{GIN}, capping their expressive power, while simultaneously incurring significant computational overhead due to costly end-to-end differentiable optimizations or auxiliary distillation steps. Second, existing explanations, whether based on latent prototypes/subgraphs \cite{pignn} or logic rules over such subgraphs, often lack the semantic granularity required for practical applications. For example, chemistry experts reason through precise structural constraints regarding atom types and bonding configurations, such as SMARTS patterns~\cite{daylight2019smarts}. Such strict topological constraints are rarely captured by current explanation paradigms.

% \textsc{SymGraph} resolves this by grounding explanations in logical predicates that mirror these expert-defined rules, rendering the entire decision mechanism explicit~\cite{LogicXGNN}.

% Second, learned explanations, whether defined by latent prototypes or logic rules over them, are often coarse-grained representations that lack \emph{semantic granularity}, failing to provide the precise, node/edge topological constraints necessary to validate findings against expert domain knowledge. For instance,  experts in chemsary reason in terms of precise rules and structural constraints, uch as SMARTS-style specifications~\cite{daylight2019smarts} regarding allowable atom types and bonding configurations. strict constraints cannot be revealed by individual subgraph explanations alone. Our framework resolves this by grounding elements in logical predicates that mirror these expert-defined rules, making the entire decision mechanism explicit~\cite{LogicXGNN}.

We contend that overcoming these barriers requires moving beyond \emph{neural message passing paradigms} toward developing standalone symbolic engines where logic serves as the model itself. To this end, we propose \textsc{SymGraph}, the first standalone symbolic framework for graph learning that is designed to transcend the bottlenecks of neural backbones. Specifically, \textsc{SymGraph} leverages discrete structural hashing and topological role-based aggregation to construct logical programs theoretically proven to be strictly more powerful than the 1-WL limit. These theoretical claims are supported by our empirical evaluations. By replacing expensive gradient descent with efficient combinatorial evolutionary search, our framework enables orders-of-magnitude faster training on CPU only while producing structural rules with higher granularity and precision than existing SE-GNNs. These rules offer superior utility for XAI, supporting scientific discovery by aligning directly with expert domain knowledge. We summarize our contributions as follows:

\begin{itemize}
    \item We propose \textsc{SymGraph}, the first standalone symbolic framework theoretically proven to break the 1-WL expressivity barrier by leveraging discrete structural hashing and topological role-based aggregation.

    \item We employ an efficient combinatorial evolutionary search over candidate symbolic programs, overcoming the traditional overhead of differentiable optimization.

    \item  Extensive empirical evaluations demonstrate that \textsc{SymGraph} consistently outperforms state-of-the-art self-explainable models while achieving $10\times$ to $100\times$ speedups in training time using only CPU execution.

    \item Thanks to its symbolic nature, \textsc{SymGraph} enforces strict structural constraints on nodes and edges, achieving finer semantic granularity than existing methods. Notably, \textsc{SymGraph} outputs align directly with SMARTS patterns, the standard 
   formalism in cheminformatics, enabling automated recovery of expert-validated 
   Structure-Activity Relationships, a capability unattainable with existing 
   coarse-grained explanations, as demonstrated on Mutagenicity and BBBP.

\end{itemize}

\section{Preliminary}
\label{sec:back}
\subsection{Graph Neural Networks}
We consider a graph \( G = (V_G, E_G) \), where \( V_G \) denotes the set of nodes and \( E_G \) the set of edges. Let \( \mathbf{X} \in \mathbb{R}^{n \times d_0} \) represent the node feature matrix and \( \mathbf{A} \in \{0, 1\}^{n \times n} \) the adjacency matrix. Standard Graph Neural Networks, specifically within the Message Passing Neural Network (MPNN) framework, generally operate in two distinct phases: (1) \emph{Message Passing} to encode local structural information into continuous node embeddings, and (2) \emph{Global Aggregation} to fuse these encodings into task-specific predictions.

\paragraph{Phase 1: Message Passing (Local Encoding).}
The primary objective of this phase is to learn a node embedding \( \mathbf{x}_v^{(L)} \in \mathbb{R}^{d_L} \) that encodes the local context of node \( v \), integrating both its topological structure and semantic feature information. Through an iterative mechanism spanning \( L \) layers, the embedding is updated as follows:
\begin{equation}
\mathbf{x}_v^{l+1} = \text{UPD} \left( \mathbf{x}_v^{l}, \text{AGG}_{local} \left( \left\{ \text{MSG} (\mathbf{x}_v^{l}, \mathbf{x}_w^{l}) \mid \mathbf{A}_{vw} = 1 \right\} \right) \right)
\end{equation}
where \( \mathbf{x}_v^{0} = \mathbf{X}_v \) denotes the initial features. The functions \( \text{MSG} \), \( \text{AGG}_{local} \), and \( \text{UPD} \) define the specific GNN architecture, with prominent examples including GCN \cite{gcn} and GIN \cite{GIN}. The final embedding \( \mathbf{x}_v^{L} \) thus serves as a compact representation of the node's role within its $L$-hop  receptive field (induced subgraph), which we denote as $S_v$.

\paragraph{Phase 2: Global Aggregation.}
Once the local embeddings are generated, the second phase aggregates them depending on the downstream task.

\textit{Graph Classification.}
For a graph-level task (mapping \( G \) to a label \( y_G \)), a global aggregation (readout) function is required to summarize the entire graph into a single representation. This constitutes the aggregation stage:
\begin{equation}
    \mathbf{x}_G^{L} = \text{AGG}_{global}(\{ \mathbf{x}_v^{L} \mid v \in V_G \}),
\end{equation}
where \( \text{AGG}_{global} \) is typically a permutation-invariant operation, just like \( \text{AGG}_{local} \). A common choice is \texttt{global\_mean\_pooling}, defined as \( \mathbf{x}_G^{L} := \text{mean}(\mathbf{x}_v^{L} \mid v \in V_G) \). The graph-level embedding \( \mathbf{x}_G^{L} \) is then passed to a final MLP layer to predict the graph label.

\textit{Node Classification.}
For a node-level task (mapping \( v \) to a label \( y_v \)), this aggregation step is omitted. Instead, the prediction is derived directly from the local embedding \( \mathbf{x}_v^{L} \).

\subsection{Logical Rule-Based Models for Graph Learning}
\label{sec:logic_prelim}
The theoretical foundation for mapping GNNs to formal logic is provided by \citet{DBLP:conf/iclr/BarceloKM0RS20}, who identify Graded Modal Logic (GML) \cite{DBLP:journals/sLogica/Rijke00} as the corresponding formalism for standard message passing. Building on these insights, we design our framework as a standalone symbolic model that mirrors the two-phase architecture of GNNs using purely logical components:

\paragraph{Level 1: Local Structural Predicates.}
The foundation of the model consists of a set of atomic predicates $\mathcal{P} = \{p_1, \dots, p_J\}$, which serve as the symbolic counterpart to message passing. Unlike standard GNN layers that aggregate neighborhood features indiscriminately into latent vectors, our predicates enforce strict feature-level constraints tied to structural roles (orbits) within the neighborhood.

Formally, let $\mathcal{O}_v = \{O_1, \dots, O_B\}$ denote the partition of node $v$'s local neighborhood $S_v$ into $B$ distinct topological orbits (grouping structurally equivalent nodes or edges). A predicate $p_j(v)$ evaluates to True if the aggregated features within these orbits satisfy a logical function:
\begin{equation}
    p_j(v) \iff \Psi_j\big( \psi(O_1, \mathbf{X}), \dots, \psi(O_B, \mathbf{X}) \big)
\end{equation}
Here, $\psi(O_i, \mathbf{X})$ represents a set of validity evaluations derived from comparing the aggregated features of orbit $O_i$ against learned thresholds (e.g., evaluating whether ``count of Carbon $> 1$'' is True), as shown in Figure~\ref{fig:orbit_rule_eg}. Consequently, $\Psi_j$ acts as a structured logical filter (e.g., in Disjunctive Normal Form) that determines whether these local signals constitute a valid structural pattern.

\begin{figure}[t]
    \centering
    \includegraphics[width=\linewidth]{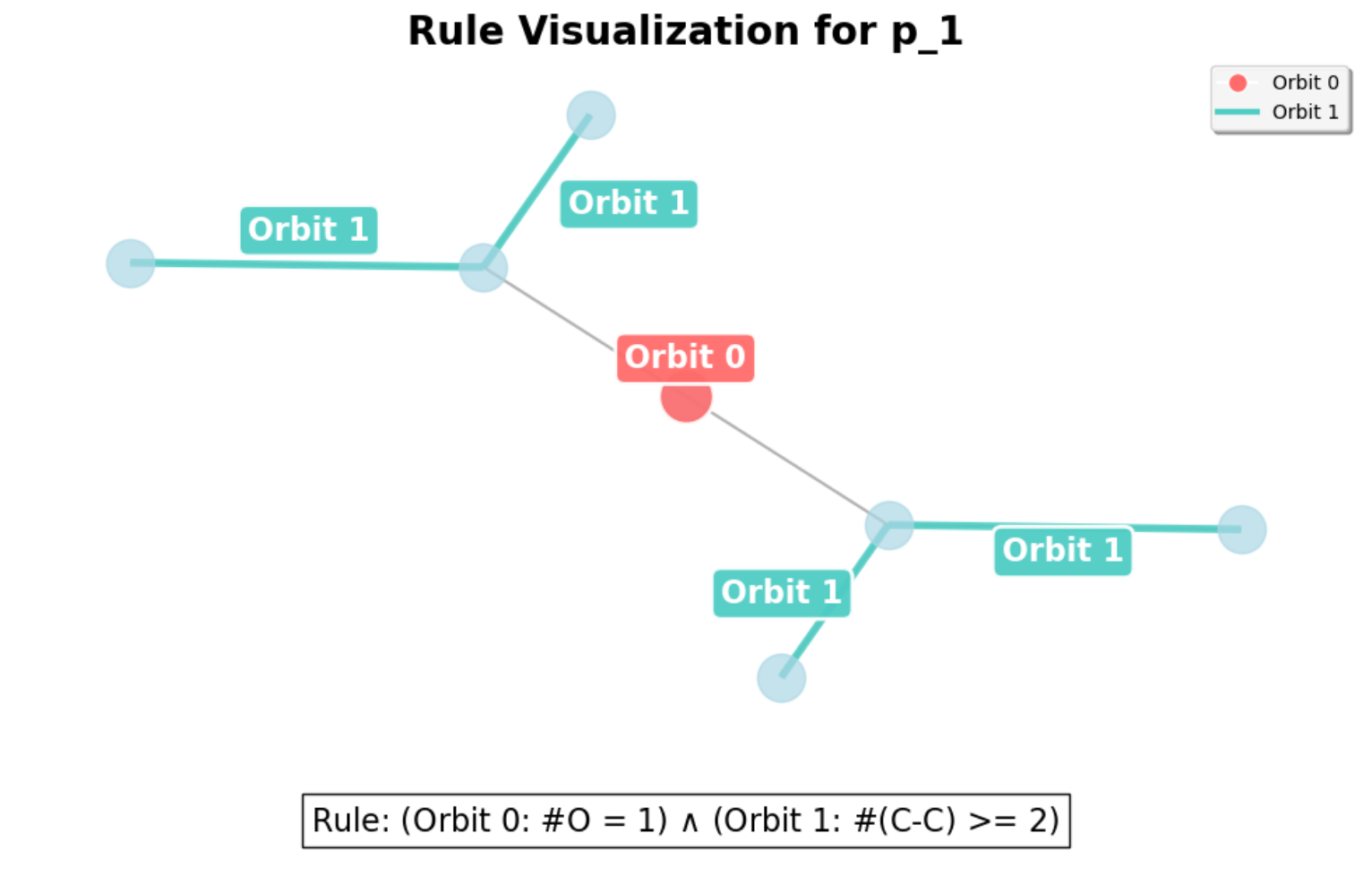}
    \caption{Visualization of a Local Structural Predicate. The extracted predicate identifies specific node and edge constraints, providing superior semantic granularity compared to canonical subgraph-based explanations. See more details in Appendix \ref{app:examples_read}.}
    \label{fig:orbit_rule_eg}
    \vspace{-5pt}
\end{figure}

\paragraph{Level 2: Global Logic Formula.} The second level translates the symbolic signals from Level 1 into a final decision.

\textit{Graph Classification.} For graph-level tasks, we perform a symbolic readout that aggregates local detections into a class-wise decision. We adopt a \textit{Quantitative Disjunctive Normal Form (Q-DNF)}, which expresses the global decision not as a black-box sum or mean, but as a structured logical rule defined over predicate frequencies. Let $C_j(G) = \sum_{v \in V_G} \mathbb{I}(p_j(v))$ be the total count of predicate $p_j$ in graph $G$. The model assigns a label $y$ if the graph satisfies a global formula $\Phi$:
\begin{equation}
    \Phi(G) = \bigvee_{m=1}^{M} \left( \bigwedge_{(j, \text{op}, \kappa) \in \mathcal{R}_m} (C_j(G) \text{ op } \kappa) \right) \implies y
\end{equation}
where $M$ represents the total number of distinct logical rules, and each rule $\mathcal{R}_m$ is a conjunction of constraints defined by an integer threshold $\kappa \in \mathbb{N}$ (e.g., $C_j \ge 2$) and an operator $\text{op} \in \{\ge, <, =\}$. By reasoning about pattern \textit{multiplicity} rather than mere existence, this formulation replaces opaque global pooling with transparent, quantitative logic.

\textit{Node Classification.} For node-level tasks,  the decision logic simplifies to combining discriminative local patterns via logical disjunction (OR). Formally, if $\mathcal{P}_y \subseteq \mathcal{P}$ denotes the subset of predicates associated with label $y$, the decision rule is defined as $\Phi_y(v) = \bigvee_{p_j \in \mathcal{P}_y} p_j(v)$.

\vspace{-5pt}

% \paragraph{Remark: Explanatory Precision and Utility.}

% This topological, role-based formulation of local predicates significantly extends the expressivity of standard GML. By resolving neighborhoods into granular orbits, our approach captures strict topological constraints that GML often fails to distinguish. Crucially, this formalization addresses a fundamental misalignment in current XAI. While motif-based methods~\cite{xgnn, GNNInterpreter} effectively isolate \emph{influential} subgraphs, they produce only subgraph-level visualizations that lack explicit semantic definitions. This is often insufficient for scientific discovery, where experts reason in terms of precise rules and structural constraints, such as SMARTS-style specifications~\cite{daylight2019smarts} regarding allowable atom types and bonding configurations. Such strict constraints cannot be revealed by individual subgraph explanations alone. Our framework resolves this by grounding elements in logical predicates that mirror these expert-defined rules, making the entire decision mechanism explicit~\cite{LogicXGNN}. For instance, as illustrated in Figure~\ref{fig:orbit_rule_eg}, our learned predicates align closely with SMARTS patterns used in Structure-Activity Relationship analysis, offering a direct bridge between learned representations and chemical intuition. We further analyze this alignment in Appendix~\ref{app:smarts_details}.

\begin{figure*}[th] 
    \centering
    \includegraphics[width=\linewidth]{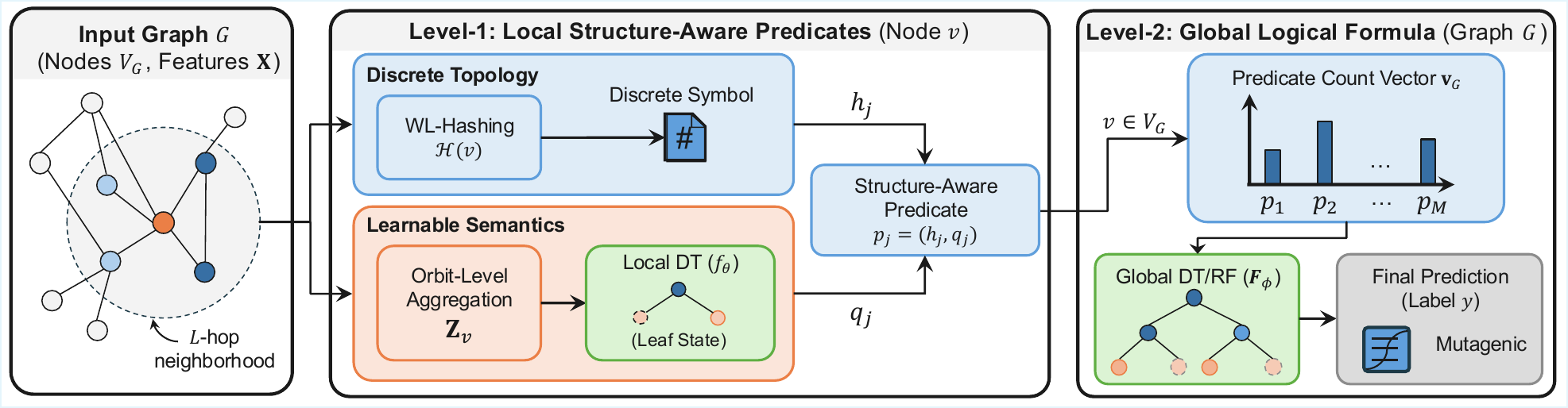}
    \caption{An overview of the \textsc{SymGraph} framework.}
    \label{fig:overview}
\end{figure*}

\section{The \textsc{SymGraph} Framework}
\label{sec:method}

\paragraph{Granular Structural Awareness via Hashing.}
To establish a standalone symbolic framework that parallels the reasoning depth of GNNs, we explicitly model the local scope of interaction. Mimicking the architectural inductive bias of an $L$-layer GNN, we focus on the receptive field of a node $v$—specifically, the $L$-hop subgraph $S_v$ centered at $v$.

However, rather than relying on continuous aggregation to implicitly encode this structure (which often leads to information loss), we explicitly capture its discrete topological signature using a structural hashing function.\footnote{We use Weisfeiler--Lehman hashing to map isomorphic subgraphs to identical identifiers, thereby capturing their purely structural skeleton.} Formally, the structural pattern of a node $v$ is identified by:
\begin{equation}
    \mathcal{H}(v) = \text{Hash}(S_v)
\end{equation}
This allows our model to reason about specific topological contexts as distinct symbols, mimicking the spatial scope of message passing while explicitly retaining structural information for superior expressiveness and interpretability.

\vspace{-5pt}

\paragraph{Topological Role-based Aggregation via Orbits.}
With the topological skeleton $\mathcal{H}(v)$ established, we must aggregate the input features $\mathbf{X}$ over $S_v$ in a manner that strictly respects structural symmetries—providing a precise alternative to the symmetric summation of GNNs. We achieve this by analyzing the automorphism group $\text{Aut}(S_v)$ acting on the node set $V_S$. The orbit of a node $u \in V_S$ is defined as:
\begin{equation}
    \text{Orb}(u) = \{ w \in V_S \mid \exists \, \pi \in \text{Aut}(S_v) \text{ such that } \pi(u) = w \}
\end{equation}
Each orbit represents an equivalence class of nodes that are structurally indistinguishable within the subgraph. To ensure a canonical feature representation, we partition the subgraph nodes into orbits and establish a consistent ordering $\mathcal{O}(S_v)$ (see Algorithm \ref{alg:stable_orbits} in Appendix~\ref{app:topo_role_encoding_details}).

\begin{definition}[Orbit-Aware Feature Vector $\mathbf{Z}_v$]
\label{def:orbit_feature}
Based on the canonical orbit ordering $\mathcal{O}(S_v)$, we construct the feature vector $\mathbf{Z}_{v}$ by concatenating orbit-wise aggregations:
\begin{equation}
    \mathbf{Z}_{v} = \text{CONCAT}_{\text{Orb} \in \mathcal{O}(S_v)} \Big( \text{AGGREGATE}_{u \in \text{Orb}} (\mathbf{X}_u) \Big)
\end{equation}
where $\text{AGGREGATE}$ applies histogram encoding for discrete features and mean-pooling for continuous features \cite{LogicXGNN}. This vector $\mathbf{Z}_v$ explicitly preserves the feature distribution relative to topological roles, preventing the ``feature mixing'' common in standard message passing. 
\end{definition}

\paragraph{Learnable Local Structural Predicates (Level 1).}
While hashing captures the topological skeleton, the semantic content is encoded in the node features. To model the explicit impact of these semantic features on the decision outcome, we employ a learnable function $f_{\theta}$ that maps the orbit-aware feature vector $\mathbf{Z}_v$ directly to a discrete state $q$. Crucially, $q$ acts as a \emph{local discriminative signal}, transforming continuous features into symbolic inputs that drive downstream decisions. This abstraction mimics the information aggregation of message passing in GNNs.

To maximize interpretability, we instantiate $f_{\theta}$ using off-the-shelf Decision Trees (DT), where the discrete state $q$ corresponds directly to the \emph{leaf node} reached by $\mathbf{Z}_v$. Unlike opaque neural embeddings, this mechanism offers inherent transparency: the path to each leaf constitutes an explicit boolean conjunction of feature thresholds (e.g., $x_0 > \alpha \land x_1 < \beta$). This effectively decomposes the decision boundary into a DNF rule, ensuring precise alignment with the logical formalisms established in Section \ref{sec:logic_prelim}.

\paragraph{Core Mechanism: The Structure-Aware Predicate.}
Our central contribution is the reformulation of a predicate. We do not treat predicates as simple boolean flags on a structure. Instead, we define them as \emph{composite tuples} that fuse topology with learned semantic states.

\begin{definition}[Structure-Aware Predicate]
\label{def:predicate}
We define a predicate $p_j$ by unifying a specific topological hash $h_{j}$ and a specific semantic state $q_{j}$ into a unique tuple:
\begin{equation}
    \boxed{ p_{j} = (h_{j}, q_{j}) }
\end{equation}
The logical evaluation of a node $v$ against predicate $p_j$ is a binary truth value determined by:
\begin{equation}
    \text{Eval}(v, p_{j}) = \underbrace{\mathbb{I}\left( \mathcal{H}(v) = h_{j} \right)}_{\text{Structure Condition}} \cdot \underbrace{\mathbb{I}\left( f_{\theta}(\mathbf{Z}_v) = q_{j} \right)}_{\text{Feature Condition}}
\end{equation}
The predicate evaluates to $1$ (True) if and only if both the topological structure of node $v$'s $L$-hop neighborhood matches the hash $h_{j}$ and the feature logic maps the input $\mathbf{Z}_v$ to the corresponding state $q_{j}$.
\end{definition}

\textit{Remark: Sufficiency for Node Classification.} For node-level tasks, this local symbolic abstraction is sufficient. We provide further details and results in Appendix~\ref{app:node_cls}.

% Since the state $q$ serves as a discriminative signal learned via direct supervision, we simplify the decision process to a logical disjunction: $\Phi_y(v) = \bigvee_{p_j \in \mathcal{P}_y} p_j(v)$.

\paragraph{Theoretical Analysis: Local Expressiveness.}
Our formulation provides strictly higher expressiveness than standard Message 
Passing Neural Networks. Standard MPNNs update a node's embedding by aggregating 
neighbors symmetrically (e.g., via summation), which inherently loses information 
about \textit{where} specific features are located relative to the node's 
topological skeleton. Specifically, our predicates distinguish graph classes 
where 1-WL fails, including regular graphs and graphs with permuted feature 
distributions across structural orbits.

% \paragraph{Theoretical Analysis: Local Expressiveness.}
% Our formulation provides strictly higher expressiveness than standard Message 
% Passing Neural Networks. Standard MPNNs aggregate neighbors symmetrically 
% (e.g., via summation), losing information about \textit{where} specific 
% features are located relative to the topological skeleton. Specifically, 
% our predicates distinguish graph classes where 1-WL fails, including regular 
% graphs and graphs with permuted feature distributions across structural orbits.

\begin{proposition}
\label{prop:local_expressiveness}
The tuple predicate $p_j = (h_{j}, q_{j})$ distinguishes rooted subgraphs that are indistinguishable by the standard 1-WL test (and consequently standard MPNNs), specifically in cases where: (1) topological structures differ but generate identical 1-WL color refinements, or (2) topological structures are identical but feature distributions differ across non-equivalent structural orbits.
\end{proposition}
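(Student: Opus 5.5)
The proposition is existential: it claims the tuple predicate separates some pairs of rooted subgraphs that 1-WL cannot separate. The plan is therefore to exhibit one explicit witness pair for each of the two cases. For each pair I would verify two things: that 1-WL, and hence any MPNN by the standard bound \cite{GIN}, assigns identical colors or embeddings to the two roots; and that some predicate $p_j=(h_j,q_j)$ evaluates differently on the two roots. Throughout I treat $\text{Hash}(S_v)$ as an isomorphism-invariant identifier that separates the specific non-isomorphic skeletons used. I would state this explicitly as a modelling assumption, namely that the hash is collision-free on the finite family under consideration. A literal WL-hash of an unrooted subgraph would itself be 1-WL-bounded, which is the subtle point discussed below.

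\textbf{Case (1): topology differs.} Take two $2$-regular graphs with uniform features. Let $G_1$ be a $6$-cycle and $G_2$ two disjoint triangles, and choose $L\ge 2$. For any root $v$, 1-WL refinement on a regular graph with constant features never splits colors, so the roots receive identical color sequences at every iteration. Consequently every MPNN embedding $\mathbf{x}_v^L$ coincides. The receptive fields differ, however. In $G_2$, $S_v$ is a triangle; in $G_1$, $S_v$ is a path on five nodes (or the whole hexagon for $L\ge 3$). These are non-isomorphic, so $\mathcal{H}(v_1)\neq\mathcal{H}(v_2)$. Choosing $h_j=\mathcal{H}(v_1)$ and any $q_j$ attained by $f_\theta(\mathbf{Z}_{v_1})$, the structure condition in Definition~\ref{def:predicate} holds for $v_1$ and fails for $v_2$. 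This gives $\text{Eval}(v_1,p_j)=1\ne 0=\text{Eval}(v_2,p_j)$.

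\textbf{Case (2): same topology, permuted features.} Fix a skeleton $S$ whose rooted automorphism group has at least two non-root orbits of equal size that 1-WL nevertheless merges from the root's viewpoint. A concrete choice is a rooted $4$-cycle $v,a,b,c$ with $L=2$, featuring two distinct node labels. Here 1-WL from the root aggregates multisets of neighbor colors. I would build two labelings whose multiset of labels is the same at each hop distance from $v$ but which are assigned to different orbits. Verifying this is the main obstacle. The difficulty is that on simple symmetric skeletons 1-WL often already distinguishes the two labelings, because refinement also sees the labels of neighbors-of-neighbors. If the $4$-cycle fails, I would instead use the standard fallback: a pair of 1-WL-equivalent labeled graphs. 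For example, take a $2$-regular skeleton, the $6$-cycle, with labels $A,B$ arranged as $AABABB$ versus another arrangement that has the same 1-WL stable coloring but places the labels in different automorphism orbits of $S_v$. I would check equality of the color histograms at each round by direct computation.

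Given such a pair, $\mathcal{H}$ agrees on the two roots because the skeleton is identical. The orbit decomposition of Definition~\ref{def:orbit_feature}, however, yields concatenated histograms $\mathbf{Z}_{v_1}\neq\mathbf{Z}_{v_2}$, since the per-orbit label counts differ. Finally, a decision tree can realize any finite separation. Pick a single threshold on the coordinate where the two vectors differ; then $f_\theta$ sends $v_1$ and $v_2$ to distinct leaves $q_1\neq q_2$. The predicate $(h,q_1)$ then separates the two roots, which completes case (2) and the proposition.
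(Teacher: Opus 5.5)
\paragraph{Case (1).} Your Case (1) is correct and is exactly the paper's argument, even down to the same hexagon-versus-two-triangles pair. In that pair even a plain WL hash of $S_v$ separates a triangle from $P_5$, so what beats 1-WL is the extraction of the ego-subgraph, not injectivity of the hash.

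\paragraph{The gap is in Case (2).} You never exhibit a witness, and both candidates you name fail for every $L$.
\begin{itemize}
\item In the rooted $4$-cycle, the non-root orbits are $\{a,c\}$ and $\{b\}$. They have different sizes and lie at different distances from $v$, so they do not satisfy your own premise. For $L\ge 2$, under the paper's definition via the full $\text{Aut}(S_v)$, $C_4$ is vertex-transitive and there is a single orbit. Either way, the orbit histograms are fixed by the label multisets of the distance shells around $v$, and at most two rounds of 1-WL read those off.
\item The hexagon fails for the same reason. Its orbits are again the distance shells around $v$, or all of $V$. On a cycle, the root's color $c_t(v)$ determines both labeled rays up to reflection. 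From $c_{t-1}(v\pm 1)$ you can delete the already-known $c_{t-2}(v)$ from its neighbor multiset to get $c_{t-2}(v\pm 2)$, and then iterate.
\end{itemize}
Hence two labelings of $C_6$ with equal 1-WL root colors agree on $S_v$ up to the reflection fixing $v$, and so have equal $\mathbf{Z}_v$. No partner for $AABABB$ exists.

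\paragraph{The missing idea.} Place two inequivalent orbits of equal size at the \emph{same} distance from the root, and swap labels between them. The paper does this among the root's neighbors. For example, take four neighbors $a,b,c,d$ with one extra edge $ab$, giving orbits $\{a,b\}$ and $\{c,d\}$. The multiset of neighbor labels, and hence the aggregate $k\alpha+k\beta$, is unchanged. Meanwhile $\mathbf{Z}_v$ differs, and one tree split gives $q_1\ne q_2$.

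Be aware that this yields indistinguishability only for a single aggregation round ($L=1$). At round two, each label is paired with its carrier's structural context, and that context differs between $O_A$ and $O_B$.

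\paragraph{Full 1-WL equivalence.} Your premise asks for equality of the stable 1-WL coloring. For that, the two orbits must also be 1-WL-equivalent inside $S$. A working instance:
\begin{itemize}
\item Join $v$ to every vertex of $C_6\cup 2C_3$. Since $v$ is the unique vertex of degree $12$, every automorphism fixes $v$. The orbits are $\{v\}$, the six hexagon vertices, and the six triangle vertices.
\item Label the hexagon $\alpha$ and the triangles $\beta$, and compare with the swapped labeling.
\item In both labelings, every $\alpha$-vertex is adjacent to $v$ and to two $\alpha$-vertices, and likewise for $\beta$.
\item By induction on rounds, all $\alpha$-vertices share one color, all $\beta$-vertices share another, and $v$ gets the same color in both graphs at every round.
\item Yet the hexagon orbit holds six $\alpha$'s in one graph and six $\beta$'s in the other.
\end{itemize}
This example needs true automorphism orbits. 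The WL-hash ordering of Algorithm~\ref{alg:stable_orbits} would merge the two orbits.
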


\begin{proof}
We provide the formal proof in Appendix~\ref{app:proof_local}.
\end{proof}
% \begin{proof}[Proof Sketch] Standard MPNNs conflate structure and features into a single pooled vector, which can map distinct neighborhoods to identical embeddings. In contrast, our approach guarantees strictly higher expressiveness by (1) explicitly identifying the receptive field structure (resolving topological ambiguities), and (2) aggregating features separately per structural orbit (retaining granular feature position information that global summation discards). We provide the formal proof in Appendix~\ref{app:proof_local}.
% \end{proof}

\subsection{Global Aggregation via Structural Histograms}
\label{sec:global_aggregation}
\paragraph{The ``Bag-of-Vectors'' Limitation.}
Standard GNNs typically employ a global readout function, such as mean-pooling, to generate a graph-level embedding. This operation reduces the graph to a ``Bag-of-Vectors'', where the distinct identities of local structures are compressed into a single latent vector. Consequently, distinct combinations of substructures can become indistinguishable due to destructive interference in the continuous embedding space (e.g., a ``negative'' feature cancelling out a ``positive'' feature).

To resolve this, we propose a paradigm shift: moving from an opaque ``Bag-of-Vectors'' to an explicit and traceable \emph{``Bag-of-Predicates''}. By treating the set of structure-aware predicates as a global vocabulary, we preserve both the structural topology and the semantic attributes of local patterns, preventing information loss during aggregation.

% Let $\mathcal{D} = \{G_1, \dots, G_N\}$ be the training dataset.

\begin{definition}[Global Predicate Vocabulary $\mathcal{P}$] 
\label{def:vocab}
We construct a global schema $\mathcal{P}$ containing all unique structure-aware predicates discovered in the training data $\mathcal{D}$:
\begin{equation}
    \mathcal{P} = \bigcup_{G \in \mathcal{D}} \bigcup_{v \in V_G} \{ p \mid \text{Eval}(v, p) = 1 \}
\end{equation}
In other words, $\mathcal{P}$ is the Cartesian product of the set of unique topological hashes and the discrete semantic states observed in the training set. Let $J = |\mathcal{P}|$ be the size of this vocabulary. We assign a fixed index $j \in \{1, \dots, J\}$ to each unique predicate $p_j \in \mathcal{P}$, establishing a canonical ordering. 
\end{definition}

\paragraph{Tabular Graph Representation.}
Using this global schema, we map every graph $G$ to a fixed-size vector. Unlike standard pooling, which only aggregates implicit \textit{semantic} values, this mapping explicitly aggregates predicate counts.

\begin{definition}[Predicate Count Vector $\mathbf{v}_G$]
\label{def:count_vector}
For a graph $G$, the Predicate Count Vector $\mathbf{v}_G \in \mathbb{N}^J$ is a histogram vector where the $j$-th entry represents the frequency of the $j$-th predicate in the graph:
\begin{equation}
    \mathbf{v}_{G}[j] = \sum_{v \in V_G} \text{Eval}(v, p_j)
\end{equation}
This transformation effectively converts the graph into a ``Bag-of-Predicates'' representation, capturing not just which structural and semantic patterns exist, but their multiplicity.
\end{definition}

\paragraph{Learnable Global Logic (Level 2).}
Once the graph is represented as $\mathbf{v}_G$, the classification task becomes learning a function $F_{\phi}: \mathbb{N}^J \rightarrow \mathcal{Y}$ that maps structural frequencies to graph labels. Crucially, because $\mathbf{v}_G$ consists of disentangled, semantically meaningful counts, the choice of $F_{\phi}$ dictates the nature of the learned logic and its interpretability.

To ensure the global reasoning remains symbolic and hierarchical, we instantiate $F_{\phi}$ as a Decision Tree. This architecture directly realizes the Q-DNF logic formalized in Section \ref{sec:logic_prelim}. By optimizing paths to leaf nodes, the tree naturally discovers the global formula $\Phi(G)$ as a disjunction of conjunctive rules defined by structural count thresholds (e.g., $C_{\text{Ring}} \ge 2 \land C_{\text{Star}} = 0$), enabling transparent, white-box reasoning. Alternatively, for better generalization performance, $F_{\phi}$ can be instantiated as a Random Forest (RF). Although this functions as a ``gray-box'' ensemble, it retains interpretability by identifying discriminative predicates via feature importance metrics, which is a popular explanation paradigm in XAI \cite{GCneuron, PGExplainer}.

% While this functions as a ``gray-box'' ensemble, it preserves the symbolic nature of the input predicates and provides significant interpretability by identifying the most discriminative  predicates via feature importance metrics.

% Alternatively, for better generalization, $F_{\phi}$ can be instantiated as a Random Forest (RF). Although a ``gray-box,'' it retains interpretability by identifying discriminative symbolic predicates via feature importance metrics.

% Alternatively, for better generalization performance, $F_{\phi}$ can be instantiated as a Random Forest (RF), which functions as a ``gray-box'' ensemble while preserving the symbolic nature of the input predicates.

\paragraph{Theoretical Analysis: Global Expressiveness.}
The transition from continuous pooling to discrete histograms offers a distinct theoretical advantage: superior expressiveness regarding substructure counting.

\begin{proposition}
\label{prop:global_expressiveness}
The Predicate Count Vector $\mathbf{v}_G$ is strictly more expressive than standard global aggregation functions (e.g., Min, Max, or Mean pooling), particularly in its ability to distinguish graphs based on substructure multiplicity.
\end{proposition}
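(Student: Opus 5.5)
To prove Proposition~\ref{prop:global_expressiveness}, I would establish two things: first, that the Predicate Count Vector $\mathbf{v}_G$ is \emph{at least} as expressive as Min, Max and Mean pooling; second, that the inclusion is strict, via explicit pairs of graphs. Throughout, I treat the local pattern of a node as the predicate it satisfies. This is justified because, by Definition~\ref{def:predicate}, each node $v$ satisfies exactly one predicate in $\mathcal{P}$, namely $(\mathcal{H}(v), f_\theta(\mathbf{Z}_v))$. Under this view, a pooling baseline applied to the same local representation sees the multiset $\{\!\{ p(v) : v \in V_G\}\!\}$ through some node-wise embedding $e(p)$.

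\textbf{Step 1 (domination).} I would observe that $\mathbf{v}_G$ is exactly the multiplicity function of the multiset $\{\!\{ p(v) \}\!\}$. Knowing it therefore determines the whole multiset, and in particular every permutation-invariant function of it. Concretely:
\begin{itemize}
\item $\max_v e(p(v)) = \max_{j:\mathbf{v}_G[j]>0} e(p_j)$, and Min is analogous;
\item $\mathrm{mean}_v\, e(p(v)) = \sum_j \mathbf{v}_G[j]\, e(p_j) \big/ \sum_j \mathbf{v}_G[j]$;
\item Sum is obtained the same way without the denominator.
\end{itemize}
Hence if $\mathbf{v}_G = \mathbf{v}_{G'}$, every such readout agrees on $G$ and $G'$. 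Equivalently, any pair separated by a pooling readout is also separated by $\mathbf{v}_G$.

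\textbf{Step 2 (strictness).} I would build two counterexamples, each with $J=2$ and predicates $p_1, p_2$.
\begin{itemize}
\item \emph{Min and Max.} Let $G$ contain one node satisfying $p_1$ and three satisfying $p_2$, and let $G'$ contain three of $p_1$ and one of $p_2$. Take, for instance, two disjoint unions of small identical components, varying the feature states so the supports match. The supports coincide, so Max and Min agree, while $\mathbf{v}_G=(1,3)\neq(3,1)=\mathbf{v}_{G'}$.
\item \emph{Mean.} Let $G'$ be two disjoint copies of $G$, e.g.\ $G$ a triangle with one labelled node. Then $\mathbf{v}_{G'} = 2\mathbf{v}_G$ but the proportions are identical, so Mean pooling coincides for every embedding $e$. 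This is precisely the ``substructure multiplicity'' clause of the statement. Since $\mathbf{v}_G$ is an integer count, it separates the pair.
\end{itemize}
In both cases the predicate assignments are easy to verify, because the hash and orbit features of isomorphic components coincide.

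\textbf{Main obstacle.} The delicate point is making the comparison fair: the baselines must be given the same node-level information, or strictness could be attributed to the local stage (already covered by Proposition~\ref{prop:local_expressiveness}) rather than to the readout. I would handle this by the Step~1 formulation, in which the baselines are arbitrary embeddings of the same predicate assignment. Under that convention, the Mean counterexample fails for \emph{all} embeddings $e$, not just a particular one. I would also remark that Sum pooling with an injective multiset encoding, as in GIN, matches but does not exceed $\mathbf{v}_G$, so the strict claim is stated only for Min, Max and Mean, as the proposition does.
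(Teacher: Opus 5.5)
Your proposal is correct. Its separation half is the paper's own argument: the paper takes $G_1=\mathcal{T}$ and $G_2$ a disjoint union of $k>1$ copies of $\mathcal{T}$. It observes that Max and Min are unchanged by idempotence and Mean by the $k/k$ cancellation, while $C_j(G_2)=k\,c_j\neq c_j=C_j(G_1)$. That is exactly your doubling example with $k=2$.

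There are three differences worth noting.

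First, the paper never proves your Step 1. It only exhibits pairs separated by $\mathbf{v}_G$ but not by pooling, which formally shows only that pooling does not dominate $\mathbf{v}_G$. Your domination step supplies the other half of ``strictly more expressive,'' but it genuinely depends on your convention that the baseline sees only an embedding $e(p(v))$ of the predicate.
\begin{itemize}
\item A real MPNN embedding can retain feature information that the leaf quantization $f_\theta$ discards. Two single-node graphs whose continuous features differ but fall in the same leaf give $\mathbf{v}_G=\mathbf{v}_{G'}$, yet different mean-pooled GNN embeddings.
\item Step 1 also needs every node's tuple $(\mathcal{H}(v),f_\theta(\mathbf{Z}_v))$ to lie in $\mathcal{P}$. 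Definition~\ref{def:vocab} guarantees this only for training graphs; out-of-vocabulary nodes are invisible to $\mathbf{v}_G$ but not to $e$.
\end{itemize}
Your Step 2, by contrast, does not need the convention at all, just like the paper's construction. Disjoint copies receive identical node embeddings under any locally computed, isomorphism-invariant representation, so the separation holds against arbitrary MPNN backbones, not merely ``for all $e$.''

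Second, your separate $(1,3)$ versus $(3,1)$ example for Min and Max is unnecessary. The doubling pair already defeats Max and Min by idempotence, which is how the paper handles all three poolings with one construction. Your example does add the mildly stronger fact that Max and Min fail even on graphs of equal size.

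Third, your remark that Sum pooling with a one-hot (or injective) encoding reproduces $\mathbf{v}_G$ exactly is a correct refinement that the paper omits. It explains why the strict claim should be restricted to Min, Max and Mean.
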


\begin{proof}
We provide the formal proof in Appendix~\ref{app:proof_global}.
\end{proof}

% \begin{proof}[Proof Sketch]
% Standard global aggregation functions suffer from mathematical limitations when counting substructures. \textit{Max-} and \textit{Min-pooling} are idempotent operations ($\max(x, x) = x$), rendering them incapable of distinguishing a single motif from multiple copies. \textit{Mean-pooling} effectively dilutes local signals by normalizing over the graph size. In contrast, $\mathbf{v}_G$ functions as a structural histogram, summing logical evaluations in orthogonal dimensions to strictly preserve multiplicity without interference or normalization artifacts. We provide the formal proof and counter-examples in Appendix~\ref{app:proof_global}.
% \end{proof}

\begin{table*}[th]
\centering
\caption{\textbf{Performance Benchmark.} Comparison of test accuracy (\%) and total runtime ($10^2$ seconds) against state-of-the-art self-explainable GNNs and standard message-passing GNN baselines. Results are averaged over five random seeds. \textbf{Bold} and \underline{underlined} indicate the best and second-best performance, respectively. \textsc{SymGraph}-DT and \textsc{SymGraph}-RF denote the decision tree and random forest variants of our framework. Notably, our approach achieves superior accuracy on complex molecular datasets while being \emph{orders of magnitude faster} than competing self-explainable methods like \textsc{LogiX-GIN} and \textsc{PiGNN}. See more results in Appendix \ref{app:add_results}.}
\label{tab:graph_acc}
\vspace{-5pt}

\resizebox{\textwidth}{!}{%
\begin{tabular}{@{}l rr rr rr rr rr rr@{}}
\toprule
& \multicolumn{2}{c}{\textbf{Ba2Motifs}} 
& \multicolumn{2}{c}{\textbf{BaMultiShapes}} 
& \multicolumn{2}{c}{\textbf{PROTEINS}} 
& \multicolumn{2}{c}{\textbf{Mutagenicity}} 
& \multicolumn{2}{c}{\textbf{NCI1}} 
& \multicolumn{2}{c}{\textbf{BBBP}} \\

\cmidrule(lr){2-3} \cmidrule(lr){4-5} \cmidrule(lr){6-7} \cmidrule(lr){8-9} \cmidrule(lr){10-11} \cmidrule(lr){12-13}

\textbf{Model} 
& Acc$\uparrow$ & Time$\downarrow$ 
& Acc$\uparrow$ & Time$\downarrow$ 
& Acc$\uparrow$ & Time$\downarrow$ 
& Acc$\uparrow$ & Time$\downarrow$ 
& Acc$\uparrow$ & Time$\downarrow$ 
& Acc$\uparrow$ & Time$\downarrow$ \\
\midrule

\textsc{GCN}
& 58.3 $\pm$ 2.3  & \underline{0.2 $\pm$ 0.0}
& 70.7 $\pm$ 9.7 & 0.5 $\pm$ 0.0
& 71.1 $\pm$ 1.5 & 2.8 $\pm$ 0.4
& 81.6 $\pm$ 1.0 & 14.4 $\pm$ 1.6
& 76.0 $\pm$ 1.3 & 12.6 $\pm$ 1.1
& 86.1 $\pm$ 1.0 & 5.3 $\pm$ 1.0 \\

\textsc{GraphSAGE}
& 48.2 $\pm$ 0.8 & 0.3 $\pm$ 0.0
& 50.6 $\pm$ 2.1 & \underline{0.3 $\pm$ 0.0}
& \underline{72.3 $\pm$ 2.3} & 2.2 $\pm$ 1.1
& 81.9 $\pm$ 1.6 & 13.9 $\pm$ 3.2
& 79.6 $\pm$ 0.3 & 12.9 $\pm$ 1.4
& 86.9 $\pm$ 1.0& \underline{4.6 $\pm$ 1.7}\\

\textsc{GAT}
& 48.2 $\pm$ 0.8 & 0.3 $\pm$ 0.0
& 49.4 $\pm$ 2.1 & \underline{0.3 $\pm$ 0.0}
& 70.9 $\pm$ 1.0 & \underline{1.8 $\pm$ 1.1}
& 82.2 $\pm$ 1.2 & 18.2 $\pm$ 1.9
& 78.0 $\pm$ 2.0 & 16.4 $\pm$ 4.3
& \underline{88.2 $\pm$ 2.0} & 6.9 $\pm$ 1.0 \\

\textsc{GIN}
& \textbf{100.0 $\pm$ 0.0} & 0.3 $\pm$ 0.0
& 95.0 $\pm$ 1.5 & 0.4 $\pm$ 0.2
& 68.5 $\pm$ 2.3 & 3.0 $\pm$ 0.4
& \underline{83.0 $\pm$ 1.6} & 13.1 $\pm$ 1.4
& \underline{81.2 $\pm$ 1.3} & 11.9 $\pm$ 2.2
& 87.7 $\pm$ 1.0 & 5.5 $\pm$ 0.6 \\

% \midrule

% & 50.0 $\pm$ 0.0 & 51.8 $\pm$ 0.2
% & 50.0 $\pm$ 0.0&50.4 $\pm$ 0.2
% & 59.6 $\pm$ 0.0& 65.2 $\pm$ 0.8
% & 55.4 $\pm$  0.1&163.6  $\pm$  2.7
% & 50.0$\pm$ 0.0&326.7 $\pm$ 13.1
% & 23.5$\pm$ 0.0&81.8 $\pm$ 2.4\\

\textsc{GNAN}
& 49.5 $\pm$ 0.5 & 51.8 $\pm$ 1.2
& 51.4 $\pm$ 1.6 & 50.4 $\pm$ 1.2
& 59.6 $\pm$ 0.0 & 65.2 $\pm$ 1.8
& 55.4 $\pm$ 0.1 & 163.6 $\pm$ 5.7
& 52.1 $\pm$ 1.7 & 326.7 $\pm$ 13.1
& 23.5 $\pm$ 0.0 & 81.8 $\pm$ 3.4\\

\textsc{GIB}
& \textbf{100.0 $\pm$ 0.0} & 31.5 $\pm$ 0.7
& 95.8 $\pm$ 4.2 & 38.8 $\pm$ 0.8
& 71.8 $\pm$ 6.1 & 21.2 $\pm$ 0.6
& 72.4 $\pm$ 7.1 & 83.0 $\pm$ 1.7
& 75.1 $\pm$ 3.1 & 137.1 $\pm$ 6.8
& 84.6 $\pm$ 1.0 & 41.2 $\pm$ 2.3 \\

\textsc{KerGNN}
&97.9 $\pm$ 0.9 & 2.7 $\pm$ 0.2
&82.7 $\pm$ 2.1 & 2.9 $\pm$ 0.2
&70.6 $\pm$ 5.1 & 3.2 $\pm$ 0.2
&72.2 $\pm$ 3.0 & \underline{10.8 $\pm$ 1.3}
& 67.4 $\pm$ 1.7 & \underline{10.1 $\pm$ 1.2}
&82.4 $\pm$ 2.7 & 5.1 $\pm$ 0.5\\

\textsc{PiGNN}
& \underline{99.4 $\pm$ 0.8} & 20.9 $\pm$ 0.7
& 85.4 $\pm$ 5.1 & 21.4 $\pm$ 1.5
& 70.0 $\pm$ 2.4 & 10.2 $\pm$ 1.1
& 78.8 $\pm$ 3.0 & 93.3 $\pm$ 3.8
& 76.6 $\pm$ 1.5 & 86.8 $\pm$ 3.5
&84.2 $\pm$ 2.0 & 43.9 $\pm$ 1.9 \\

\textsc{LogiX-GIN}
& \textbf{100.0 $\pm$ 0.0} & 34.4 $\pm$ 1.3
& \textbf{100.0 $\pm$ 0.0} & 35.0 $\pm$ 1.5
& 72.1 $\pm$ 5.8 &73.5 $\pm$ 2.4
& 79.0 $\pm$ 1.6 & 72.6 $\pm$ 2.5
& 77.6  $\pm$ 2.8  & 158.4  $\pm$ 14.9
& 85.1 $\pm$ 0.7 & 77.4 $\pm$ 3.1 \\

\midrule

\textsc{SymGraph}-DT
& \textbf{100.0 $\pm$ 0.0} & \multirow{2}{*}{\textbf{0.1 $\pm$ 0.0}}
& \underline{97.5 $\pm$ 0.5} & \multirow{2}{*}{\textbf{0.1 $\pm$ 0.0}}
& \underline{72.9 $\pm$ 4.1} & \multirow{2}{*}{\textbf{1.7 $\pm$ 0.1}}
& 81.4 $\pm$ 1.0 & \multirow{2}{*}{\textbf{3.5 $\pm$ 0.3}}
& 76.8 $\pm$ 1.6 & \multirow{2}{*}{\textbf{3.8 $\pm$ 0.2}}
& 87.1 $\pm$ 1.2 & \multirow{2}{*}{\textbf{1.3 $\pm$ 0.1}} \\

\textsc{SymGraph}-RF
& \textbf{100.0 $\pm$ 0.0} &
& 96.7  $\pm$ 0.8 &
& \textbf{78.9  $\pm$ 1.8} &
& \textbf{84.8 $\pm$ 1.1} &
& \textbf{84.7 $\pm$ 1.1} &
& \textbf{89.5 $\pm$ 1.7} \\

\bottomrule
\end{tabular}
}
\end{table*}

\vspace{-5pt}

\subsection{Optimization via Evolutionary Search}
\label{sec:training}

\paragraph{Leaf State Granularity as Optimization Goal.}
The expressive power of \textsc{SymGraph} is governed by the granularity of the local predicates $\mathcal{P}$, specifically the leaf node resolution of each topological pattern.\footnote{Note that for tasks dominated by \emph{strong structural signals}, feature quantization ($q$) often becomes redundant. In this regime, the optimization effectively simplifies to standard supervised learning (e.g., using a DT) over a fixed vocabulary of topological hashes.} While the quantization states $q$ can be learned via direct supervision from labels, the final graph-level decision relies on the global Predicate Count Vector $\mathbf{v}_G$. Consequently, each topological hash receives only weak supervision, necessitating a search for the optimal semantic granularity for its leaf states.

We therefore formulate the optimization of \textsc{SymGraph} as a combinatorial search for the optimal vocabulary structure. We parameterize the semantic resolution via a genome vector $\boldsymbol{\lambda} \in \{1, \dots, \Lambda_{\max}\}^K$, where $K$ is the number of unique topological patterns in $\mathcal{D}$ and $\lambda_k$ dictates the \emph{active leaf count} (granularity level) for the topological pattern $h_k$. Our objective is to discover the configuration $\boldsymbol{\lambda}^*$ that maximizes global fitness while penalizing vocabulary inflation:
\begin{equation}
\label{eq:optimization_goal}
    \boldsymbol{\lambda}^* = \argmax_{\boldsymbol{\lambda}} \Big[ \Pi\big( F_{\phi}(\mathbf{V}(\mathcal{D}, \boldsymbol{\lambda})) \big) - \gamma \|\boldsymbol{\lambda}\|_1 \Big]
\end{equation}
where $\mathbf{V}(\mathcal{D}, \boldsymbol{\lambda})$ represents the dataset projected into the feature space defined by $\boldsymbol{\lambda}$ (i.e., the matrix of vectors $\mathbf{v}_G$), $\Pi(\cdot)$ denotes the validation accuracy of the global classifier, and $\gamma$ controls the sparsity of the symbolic representation.

\paragraph{Evolutionary Search with Master Tree Pre-Caching.}
Since the search space of $\boldsymbol{\lambda}$ is strictly combinatorial, we employ a Genetic Algorithm (GA) to traverse it efficiently. However, a na\"ive execution is still computationally prohibitive, as evaluating the fitness of any candidate genome implies retraining thousands of local models. To resolve this evaluation bottleneck, we introduce a \emph{Master Tree Pre-Caching} strategy. First, we impose a uniform complexity upper bound $\Lambda_{\max}$ (maximum allowable leaf count) across all patterns. For each topological hash $h_k$, we then pre-train a single \emph{Master Decision Tree} expanded to this limit, capturing the finest possible semantic granularity. We cache the leaf index assignments for every node $v$ across all pruning levels $1 \dots \Lambda_{\max}$ into a \emph{lookup table matrix} $\mathbf{T}_k$. This transforms the generation of semantic states into an $O(1)$ lookup operation, $q(\lambda_k) = \mathbf{T}_k[\text{idx}(v), \lambda_k]$, effectively allowing the global vocabulary $\mathcal{P}$ to be mutated dynamically without model retraining. By coupling this caching mechanism with evolutionary search, we can efficiently discover the optimal semantic abstraction for each topological structure. The full algorithm is detailed in Appendix~\ref{app:ga_algo}.

\section{Experiments}
\label{sec:experiments}

In this section, we conduct extensive experimental evaluations to answer the following research questions:

\begin{itemize}
    \item \textbf{RQ1 (Performance):} How does \textsc{SymGraph} compare to SOTA self-explainable GNNs and standard GNN baselines in classification accuracy and efficiency?

    \item \textbf{RQ2 (Ablation):} How do key design choices, such as the Topological Role Encoding and the Predicate Counting, contribute to the model's overall efficacy?

    \item \textbf{RQ3 (Discovery \& Comparison):} Can \textsc{SymGraph} uncover scientifically meaningful patterns that align with expert domain knowledge, particularly in complex chemical tasks? Furthermore, how does the quality of its explanations compare to those of existing methods?
\end{itemize}

\paragraph{Baselines and Datasets.}

We benchmark \textsc{SymGraph} against a comprehensive set of competitive methods, categorized into two groups: (i) State-of-the-art self-explainable GNNs, including \textsc{LogiX-GIN}~\cite{LogiXGIN}, \textsc{PiGNN}~\cite{pignn}, \textsc{GIB}~\cite{gib}, \textsc{KerGNN}~\cite{kergnn}, and \textsc{GNAN}~\cite{gnan}; and (ii) Standard Message Passing Neural Networks, specifically GCN, GIN, GraphSAGE~\cite{sage}, and GAT~\cite{GAT}. To ensure a rigorous comparison, we adopt the exact evaluation suite used in \textsc{LogiX-GIN}~\cite{LogiXGIN} and prior literature. This includes diverse standard XAI benchmarks such as BaMultiShapes, as well as real-world molecular datasets like Mutagenicity. This selection reflects a dual focus: overcoming the 1-WL expressivity bottleneck while demonstrating actionable Explainable AI in high-stakes domains such as drug discovery.
Detailed descriptions of these datasets are provided in Appendix~\ref{app:dataset}.

For all baselines, we adopt the hyperparameters recommended in the original works to ensure fair comparison. Detailed training and implementation configurations for \textsc{SymGraph} and baselines are provided in Appendix~\ref{app:setup}.

\begin{figure*}[b]
    \centering % Center the figures
    % --- Top Row ---
    \subfigure[\textsc{LogiX-GIN} learned irrelevant chemical components and rules.]{%
        \includegraphics[width=0.48\linewidth]{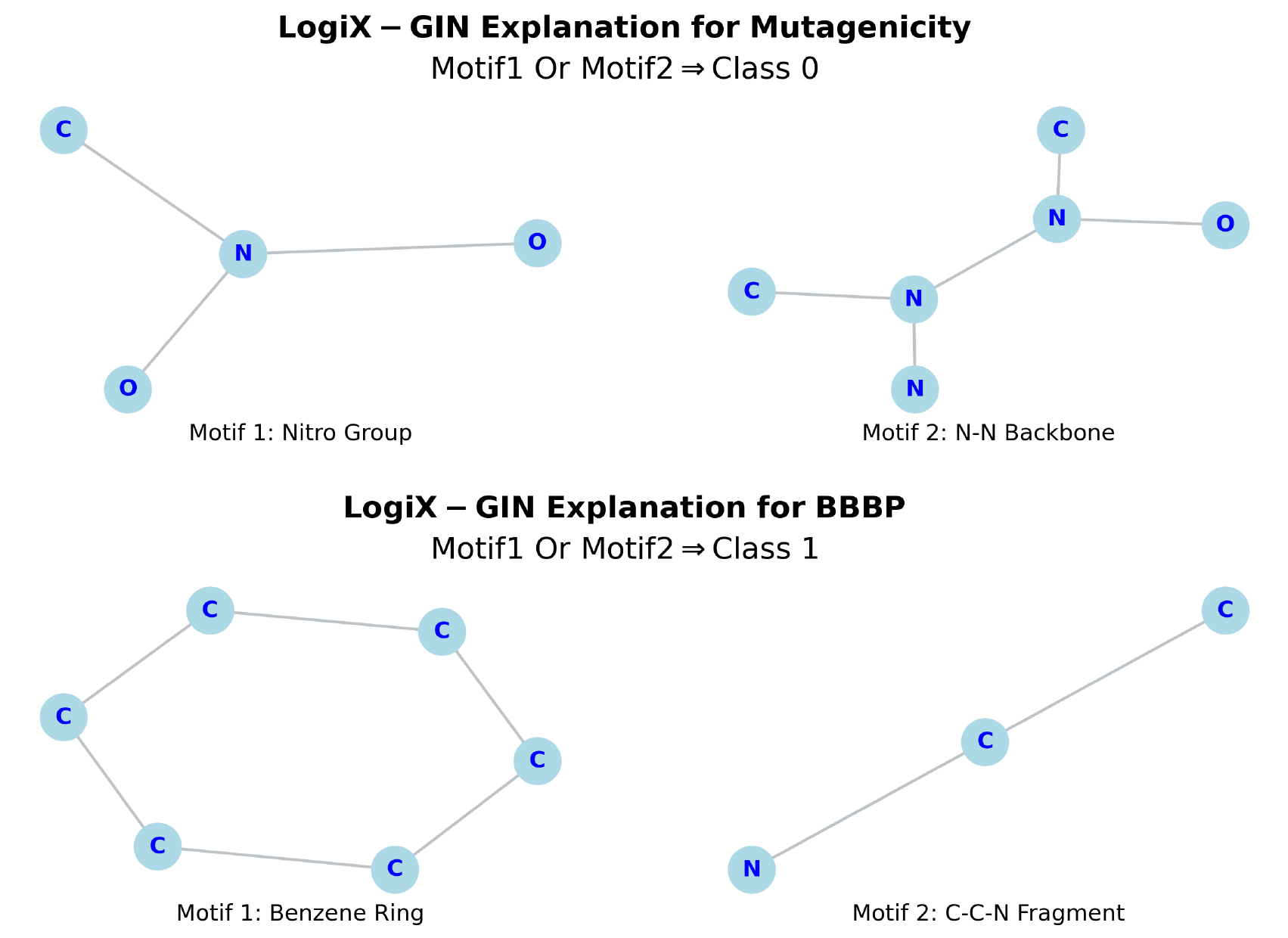}
        \label{fig:top-left}
        \vspace{5pt}
    }
    \subfigure[\textsc{GraphTrail} learned invalid chemical components.]{%
        \includegraphics[width=0.48\linewidth]{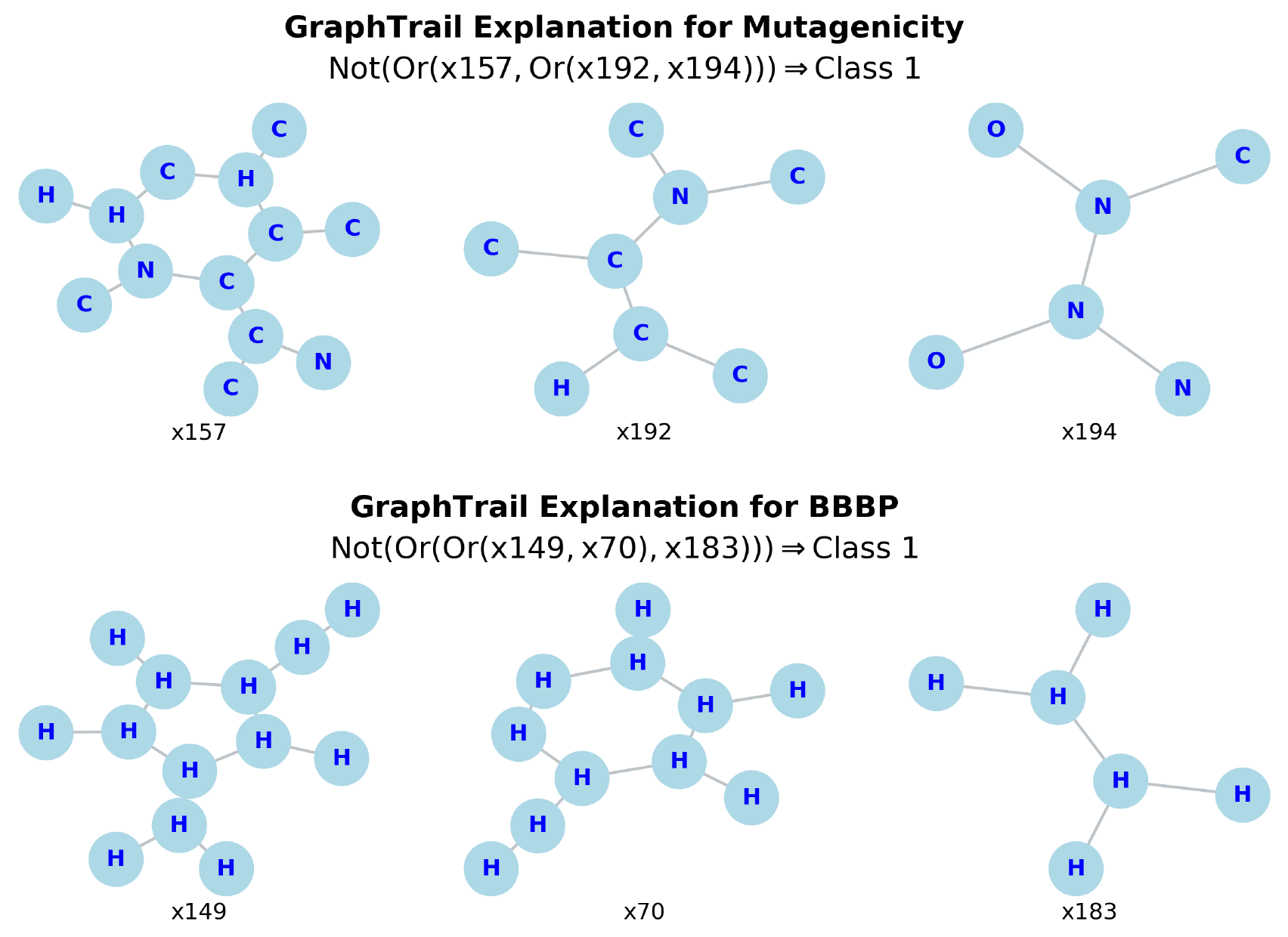}
        \label{fig:top-right}
        \vspace{5pt}
    }
    % --- Overall Caption ---
    \vspace{-5pt}
    \caption{Baselines' explanations exhibit irrelevant rules and chemically invalid components.}
    \label{fig:baseline_exp}
    \vspace{-5pt}
\end{figure*}

% \begin{figure*}[t]
%     \centering % Center the figures
%     % --- Top Row ---
%     \subfigure[\textsc{GLG} learned no rules or conflicted rules.]{%
%         \includegraphics[width=0.48\linewidth]{figures/merged_explanations_glg.pdf}
%         \label{fig:top-left}
%         \vspace{5pt}
%     }
%     \subfigure[\textsc{GTrail} learned invalid chemical subgraphs.]{%
%         \includegraphics[width=0.48\linewidth]{figures/merged_explanations_gt.pdf}
%         \label{fig:top-right}
%         \vspace{5pt}
%     }
%     % --- Overall Caption ---
%     \caption{Baselines' explanations exhibit conflicting rules and chemically invalid subgraphs.}
%     \label{fig:baseline_exp}
% \end{figure*}

\begin{figure*}[t]
    \centering % Center the figures
    % --- Top Row ---
    \subfigure[
    \vspace{5pt}
    \begin{tabular}{@{}l@{}}
    % $(\neg p_{3} \land p_{7}) \lor (p_{3} \land p_{27}) \Rightarrow \text{BBBP class } 0$ \\
    % $(\neg p_{3} \land \neg p_{7}) \lor (p_{3} \land \neg p_{27}) \Rightarrow \text{BBBP class } 1$
    \end{tabular}
    ]{%
        \includegraphics[width=0.48\linewidth]{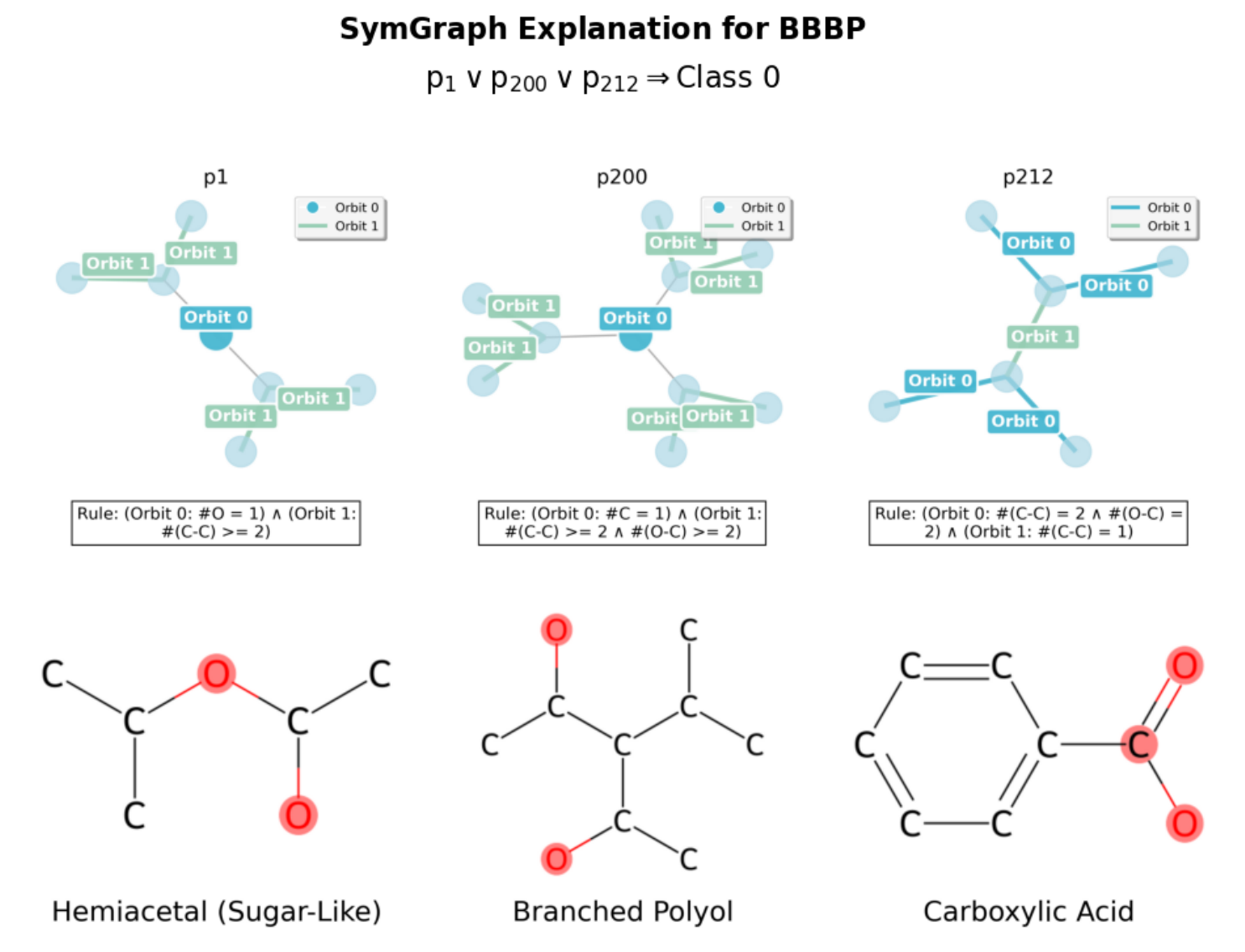}
        \label{fig:top-left}
        \vspace{5pt}
    }
    \subfigure[
    \vspace{5pt}
    \begin{tabular}{@{}l@{}}
    % $(\neg p_{2} \land p_{20}) \lor (p_{2} \land \neg p_{38}) \Rightarrow \text{Mutag. class } 0$ \\
    % $(\neg p_{2} \land \neg p_{20}) \lor (p_{2} \land p_{38}) \Rightarrow \text{Mutag. class } 1$
    \end{tabular}
    ]{%
        \includegraphics[width=0.48\linewidth]{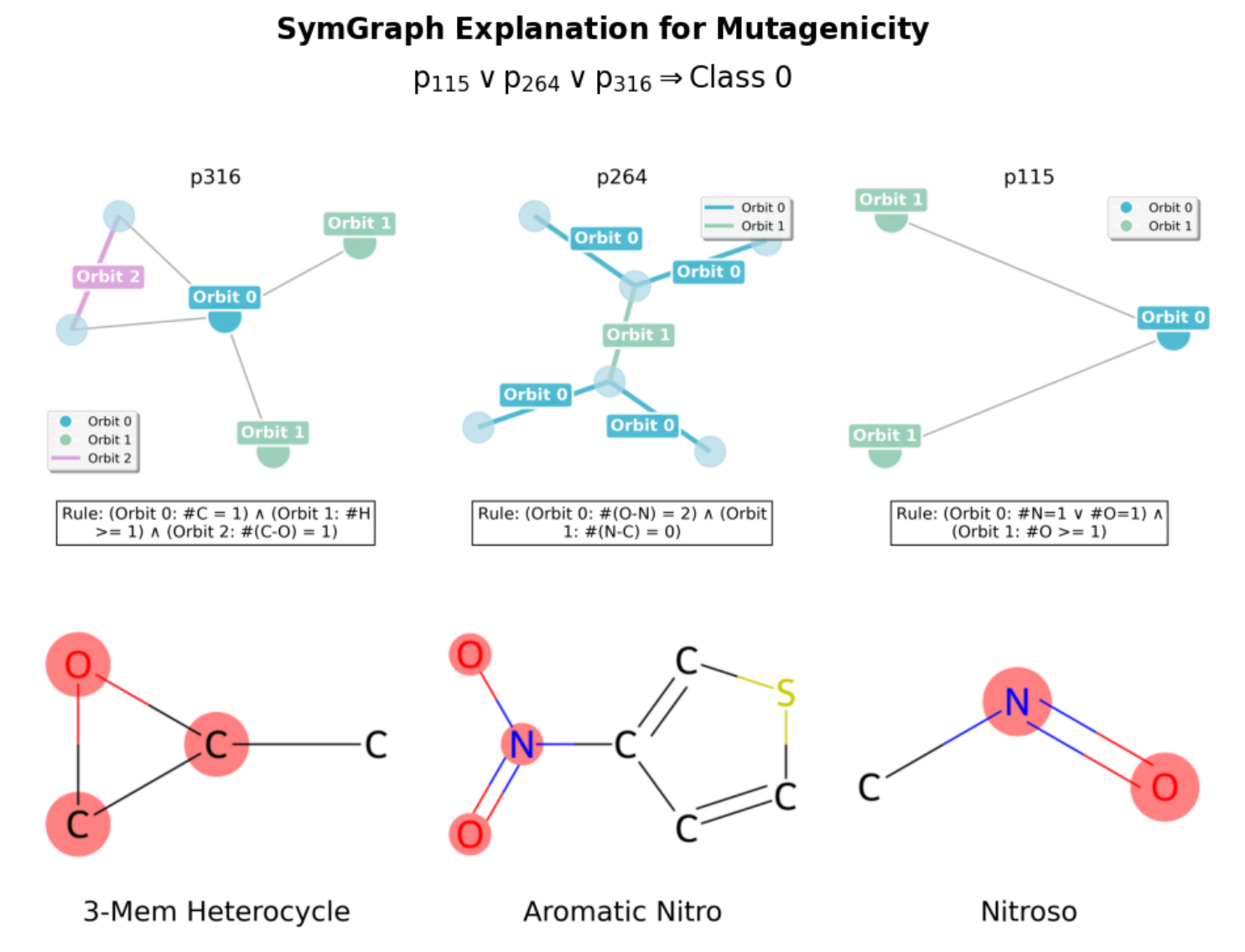}
        \label{fig:top-right}
        \vspace{5pt}
    }
    % --- Overall Caption ---
    \vspace{-5pt}
    \caption{\textsc{SymGraph} identifies expert-validated chemical motifs that provide ground-truth explanations for molecular properties. Our predicates enforce structural constraints on node and edge attributes, which are readily mapped to SMARTS patterns, as illustrated by the red highlights on the chemical compounds. This provides a direct bridge between learned logic and established domain knowledge.}
    \label{fig:our_exp}
    \vspace{-5pt}
\end{figure*}

\subsection{How effective is \textsc{SymGraph} as a classification tool?}
\label{sec:exp_performance}

We evaluate the classification performance of \textsc{SymGraph} against state-of-the-art baselines in Table~\ref{tab:graph_acc}. Our method demonstrates both superior accuracy and exceptional computational efficiency across diverse benchmark tasks.

On molecular datasets, \textsc{SymGraph} establishes a new state-of-the-art for interpretable graph classification, surpassing the leading self-explainable baseline \textsc{LogiX-GIN} by substantial average margins of approximately 6\%. More strikingly, \textsc{SymGraph}-RF consistently outperforms \textsc{GIN} across all datasets, despite \textsc{GIN}'s theoretical equivalence to the 1-WL test—the discriminative limit of standard message-passing neural networks. This systematic improvement provides empirical evidence that our structure-aware predicates capture higher-order topological patterns (e.g., functional group symmetries, ring structures) that are provably beyond the expressive capacity of standard MPNNs.

% On synthetic benchmarks, \textsc{SymGraph} achieves 100\% accuracy on Ba2Motifs and 97.5\% on BaMultiShapes. The marginal gap compared to \textsc{LogiX-GIN} on BaMultiShapes reflects a fundamental trade-off: \textsc{LogiX-GIN} uses Graded Modal Logic \cite{DBLP:journals/sLogica/Rijke00} to aggregate neighborhood degree distributions, which enables broader generalization but risks ``structural hallucination'', i.e, misclassifying trees as houses based on degree sequences alone. In contrast, \textsc{SymGraph} enforces exact subgraph isomorphism, ensuring explanations are structurally equivalent to ground-truth patterns rather than merely statistically correlated.

\begin{table}[t]
    \centering
    \caption{\textbf{Ablation Study.} Impact of removing key components from \textsc{SymGraph}. Results are reported as mean $\pm$ standard deviation over 5 random seeds. The consistent performance degradation across all variants validates the design principles of \textsc{SymGraph}.}
    \label{tab:ablation}
    % Use \columnwidth to fit exactly inside one column
    \resizebox{\columnwidth}{!}{%
    \begin{tabular}{@{}lrrrr@{}}
        \toprule
        \textbf{Model Variant} & \textbf{PROTEINS} & \textbf{Mutagenicity} & \textbf{NCI1} & \textbf{BBBP} \\ 
        \midrule
        \multicolumn{5}{l}{\textit{Ablation 1: Topological Role Encoding via Orbits (Orbits)}} \\
        DT w/o Orbits & 70.0 $\pm$ 3.4 & 78.4 $\pm$ 0.9 & 73.0 $\pm$  1.2 & 84.4 $\pm$  1.4 \\
        RF w/o Orbits & 75.3 $\pm$ 2.6 & 81.6 $\pm$ 0.7 & 81.5 $\pm$  1.1 & 86.7 $\pm$  1.3 \\
        \midrule
        \multicolumn{5}{l}{\textit{Ablation 2: Structural Histograms (Counts)}} \\
        DT w/o Counts & 66.7 $\pm$ 4.7& 80.1 $\pm$ 0.9  & 74.2 $\pm$ 1.0 & 84.6 $\pm$ 1.4\\
        RF w/o Counts & 75.2 $\pm$ 1.4  & 83.4 $\pm$ 1.7  & 82.9 $\pm$ 1.7 & 88.5 $\pm$ 1.2 \\      
        \midrule
        \multicolumn{5}{l}{\textit{Ablation 3: Optimization via Evolutionary Search (GA)}} \\
        DT w/o GA     & 69.2 $\pm$ 2.8 & 79.8 $\pm$ 1.1 & 75.3  $\pm$ 1.2 & 86.3 $\pm$ 1.6 \\
        RF w/o GA     & 76.4 $\pm$ 1.7 & 82.8 $\pm$ 1.2 & 82.7 $\pm$ 1.4 & 88.8 $\pm$ 0.6 \\
        \midrule
        \multicolumn{5}{l}{\textit{Full Models}} \\
        \textsc{SymGraph}-DT & 72.9 $\pm$ 4.1 & 81.4 $\pm$ 1.0 & 76.8 $\pm$ 1.6 & 87.1 $\pm$ 1.2 \\ 
        \textsc{SymGraph}-RF & 78.9  $\pm$ 1.8 & 84.8 $\pm$ 1.1 & 84.7 $\pm$ 1.1 & 89.5 $\pm$ 1.7 \\ 
        \bottomrule
    \end{tabular}%
    }
    \vspace{-5pt}
\end{table}

Beyond accuracy gains, \textsc{SymGraph} achieves dramatic speedups over competing self-explainable methods such as \textsc{LogiX-GIN} and \textsc{PiGNN}. By requiring only CPU computation without the need for GPU acceleration, it delivers $10\times$ to $100\times$ speedups across benchmarks. This efficiency stems from our fundamental architectural choice: by replacing expensive gradient-based optimization with a combinatorial Evolutionary Search over discrete structural hashes, we eliminate the computational bottleneck of end-to-end differentiable training that burdens neural SE-GNNs.

\subsection{How do design choices impact performance?}
\label{sec:exp_ablation}

To validate the architectural principles of \textsc{SymGraph}, we conducted a systematic ablation study, as presented in Table~\ref{tab:ablation}. First, replacing the Orbit-Aware Feature Vector $\mathbf{Z}_v$ with a baseline that concatenates features by hop distance ($0, 1, \dots, L$) resulted in a 3.0\% performance decrease for both DT and RF. This confirms that explicitly partitioning nodes via the automorphism group is essential for resolving structural ambiguities. Second, restricting the Predicate Count Vector $\mathbf{v}_G$ to binary indicators caused a sharp 6.2\% decline in accuracy on PROTEINS. This validates that an explicit ``Bag-of-Predicates'' histogram provides superior discriminative power over simple motif detection. Finally, removing the evolutionary search for predicate refinement led to an average 2.0\% decrease in RF accuracy across benchmarks, highlighting the efficacy of this optimization.

% Finally, disabling the evolutionary search for semantic granularity consistently reduced performance, with RF accuracy dropping by approximately 2.0\% on average across all benchmarks. This support its benift of optimization.

% Unlike hop-wise pooling, which suffers from feature mixing by aggregating non-equivalent nodes symmetrically, our approach preserves the distinct topological roles required for granular reasoning.

% This demonstrates that capturing multiplicity is critical for avoiding the destructive interference often found in continuous ``Bag-of-Vectors'' embeddings.

\subsection{What knowledge can we derive using \textsc{SymGraph}?}
\label{sec:exp_discovery}

We evaluate our explanations against the self-explainable \textsc{LogiX-GIN} and the SOTA post-hoc rule-based explainer \textsc{GraphTrail} \cite{GraphTrail}. As illustrated in Figure \ref{fig:baseline_exp}, both baselines exhibit significant limitations on the Mutagenicity and BBBP datasets. \textsc{LogiX-GIN} identifies irrelevant noise motifs, such as generic nitrogen backbones or benzene rings, that lack discriminative power. Meanwhile, \textsc{GraphTrail} consistently generates chemically invalid subgraphs, notably the x149 motif consisting of a structurally impossible cluster of isolated hydrogen atoms, a phenomenon previously noted in \citep{LogicXGNN}.

In contrast, \textsc{SymGraph} recovers expert-validated chemical patterns that align with established domain knowledge (Figure \ref{fig:our_exp}). For instance, our method successfully identifies the 3-membered heterocycle, aromatic nitro, and nitroso groups 
in the Mutagenicity dataset, all of which are documented motifs that participate in chemical reactions 
leading to DNA damage or mutagenesis~\cite{kazius2005derivation}. Similarly, for the BBBP dataset, 
\textsc{SymGraph} extracts functional motifs such as hemiacetals, branched polyols, and carboxylic 
acids~\cite{pajouhesh2005medicinal}, providing a grounded explanation for the model's 
non-permeability predictions.

Furthermore, our explanation framework provides a significant advantage over canonical explanation paradigms, including subgraph-based explanations~\cite{pignn, PGExplainer} and logic rules over such subgraphs, as seen in \textsc{LogiX-GIN} and \textsc{GraphTrail}. These existing methods typically rely on influential subgraphs that lack explicit semantic granularity, a characteristic that is often insufficient for scientific discovery. By enforcing structural constraints on node and edge attributes, our approach offers superior semantic granularity that aligns closely with the SMARTS patterns~\cite{daylight2019smarts} utilized in Structure-Activity Relationship (SAR) analysis. This alignment allows us to directly identify specific functional groups that explain SAR, as shown by the red highlights in Figure~\ref{fig:our_exp}. For instance, our model identifies that the $\text{CO}_2$ moiety in carboxylic acids is a key determinant in reducing blood-brain barrier permeability (BBBP), aligning with evidence that acidic drugs are regulated by specific monocarboxylic acid transport carriers~\cite{kang1990acidic}. Such fine-grained rules bridge the gap between advances in XAI for GNNs and scientific discovery.

Alignment between our rules and SMARTS patterns is discussed in Appendix~\ref{app:smarts_details}. We further validate \textsc{SymGraph}'s explanations on synthetic benchmarks in Appendix~\ref{sec:validation_synthetic}.

\section{Related Work}
\label{sec:related} 

\paragraph{Self-explainable GNNs.}
Self-explainable GNNs (SE-GNNs) aim to produce explanations intrinsically as part of the inference process. One prominent line of work leverages prototype-based approaches \cite{pignn, ProtGNN, gib}, which compare the embeddings of an input graph to a set of learned prototypes using similarity measures to enable inherently interpretable decision-making. An alternative direction leverages kernel-based methods; for instance, \textsc{KerGNN} \cite{kergnn} employs graph kernels and uses kernel activations to explain model predictions in terms of substructure similarity. More recently, \textsc{LogiX-GIN} \cite{LogiXGIN} attempted to incorporate Graded Modal Logic \cite{DBLP:journals/sLogica/Rijke00} into the GIN architecture to generate rule-based explanations, bridging neural representations with symbolic reasoning.

\paragraph{Post-hoc Logic-based Explanations.} This line of work extracts global logical rules to characterize the behavior of pre-trained GNNs \citep{GCneuron, GLGExplainer, GraphTrail}. These methods typically follow a two-stage pipeline: first mapping graph substructures from hidden embeddings into an \emph{intermediate, abstract} concept space, then optimizing logical formulas over these concepts to generate class-discriminative 
rules. While abstract concepts are subsequently grounded in representative 
subgraphs for interpretability, \citet{LogicXGNN} reveals a critical limitation: 
fidelity is evaluated solely in the intermediate concept space, leaving grounding 
quality unverified and often resulting in chemically invalid explanations. In contrast, our predicates are constructed directly from interpretable structural and semantic primitives, ensuring transparency without intermediate abstraction layers.

\section{Conclusion}
\label{sec:conclusion}

In this work, we present \textsc{SymGraph}, the first standalone symbolic 
framework theoretically proven to break the 1-WL expressivity barrier via 
discrete structural hashing and topological role-based aggregation. Our 
extensive empirical evaluations demonstrate that \textsc{SymGraph} consistently 
outperforms state-of-the-art self-explainable models while achieving $10\times$ 
to $100\times$ speedups in training time using only CPU. \textsc{SymGraph} 
generates strict structural constraints on nodes and edges, providing finer 
semantic granularity than existing explanation paradigms. We demonstrate its 
practical utility by recovering expert-validated Structure-Activity Relationships. 
Future work will apply \textsc{SymGraph} to complex biochemistry datasets to 
evaluate its potential in discovering novel Structure–Activity Relationships.

% In this work, we present \textsc{SymGraph}, the first standalone symbolic framework theoretically proven to break the 1-WL expressivity barrier by leveraging discrete structural hashing and topological role-based aggregation. Our extensive empirical evaluations demonstrate that \textsc{SymGraph} consistently outperforms state-of-the-art self-explainable models while achieving $10\times$ to $100\times$ speedups in training time using only CPU. \textsc{SymGraph} generates strict structural constraints on nodes and edges as its explanations, providing finer semantic granularity than existing explanation paradigms. We demonstrate its practical utility by recovering expert-validated Structure-Activity Relationships. Our future work will apply \textsc{SymGraph} to complex frontier biochemistry datasets to evaluate its full potential in discovering novel Structure–Activity Relationships.

\newpage

\section*{Impact Statement}

This work presents a significant step toward making graph neural networks more transparent and reliable for deployment in high-stakes scientific domains, such as pharmaceutical drug discovery. By replacing opaque black-box embeddings with explicit, auditable logical rules, \textsc{SymGraph} facilitates rigorous human oversight and supports the acceleration of knowledge discovery through the identification of novel structural motifs. This transparency is crucial for ensuring that AI-driven scientific insights are grounded in verifiable chemical logic, thereby reducing the risk of biased or spurious correlations in critical decision-making processes. We do not foresee specific negative societal consequences beyond the general dual-use risks inherent to all predictive machine learning technologies, such as those associated with the synthesis of hazardous compounds; however, the inherent interpretability of our framework provides an additional layer of safety by allowing experts to readily identify and vet the reasoning behind a model’s predictions.

% In the unusual situation where you want a paper to appear in the
% references without citing it in the main text, use \nocite
% \nocite{langley00}
\newpage

\bibliography{example_paper}
\bibliographystyle{icml2026}

%%%%%%%%%%%%%%%%%%%%%%%%%%%%%%%%%%%%%%%%%%%%%%%%%%%%%%%%%%%%%%%%%%%%%%%%%%%%%%%
%%%%%%%%%%%%%%%%%%%%%%%%%%%%%%%%%%%%%%%%%%%%%%%%%%%%%%%%%%%%%%%%%%%%%%%%%%%%%%%
% APPENDIX
%%%%%%%%%%%%%%%%%%%%%%%%%%%%%%%%%%%%%%%%%%%%%%%%%%%%%%%%%%%%%%%%%%%%%%%%%%%%%%%
%%%%%%%%%%%%%%%%%%%%%%%%%%%%%%%%%%%%%%%%%%%%%%%%%%%%%%%%%%%%%%%%%%%%%%%%%%%%%%%
\newpage

\appendix
\onecolumn

\section{Dataset Details}
\label{app:dataset}

We evaluate our approach on a diverse set of graph and node classification benchmarks that are widely used in explainable AI research on GNNs and graph learning. Tables~\ref{tab:dataset_main} and~\ref{tab:dataset_supp} summarize the key statistics of these datasets.

\begin{itemize}
    \item \textbf{Molecular and Biological Graphs.} 
    Mutagenicity, NCI1, and PROTEINS are graph classification datasets from the TUDataset collection~\cite{TUDataset}, while BACE, BBBP, and ClinTox are molecular benchmarks from MoleculeNet~\cite{wu2018moleculenetbenchmarkmolecularmachine}. 
    In these datasets, nodes represent atoms or amino acids, and edges correspond to chemical bonds or spatial proximity.
    Mutagenicity contains compounds labeled by their mutagenic effect on a Gram-negative bacterium (label~1 indicates mutagenic).
    NCI1 classifies molecules according to their anticancer activity.
    BBBP labels molecules based on their ability to penetrate the blood--brain barrier.
    PROTEINS consists of protein graphs, where each graph represents a protein structure labeled by its enzyme class.

    \item \textbf{Synthetic Graph Classification.} 
    Ba2Motifs and BAMultiShapes are synthetic benchmarks designed to evaluate the faithfulness of graph-level explanations.
    Ba2Motifs contains Barabási--Albert (BA) graphs augmented with specific motif patterns, where class labels depend on the presence of target motifs.
    BAMultiShapes consists of 1{,}000 BA graphs with randomly attached structural motifs such as houses, grids, and wheels~\cite{GNNExplainer}. Class~0 includes plain BA graphs or graphs containing a single motif, while Class~1 contains graphs enriched with combinations of multiple motifs. Both datasets exhibit clear ground-truth logic rules.

    \item \textbf{Synthetic Node Classification.} 
    To further assess explanation quality on node-level tasks, we use two large single-graph benchmarks \cite{GNNExplainer}.
    BaShapes consists of a Barabási--Albert base graph (300 nodes) with 80 ``house'' motifs attached, together with random perturbations. Nodes are labeled according to their structural role (base graph, or top, middle, and bottom of the house).
    TreeGrid is constructed from a balanced binary tree (height~8) with 80 $3\times3$ grid motifs attached. Nodes are labeled based on their position within the grid (corner, edge, or center) or the underlying tree.

    \item \textbf{Social Graphs.} 
    IMDB-BINARY is a social network dataset from TUDataset~\cite{TUDataset}, where each graph corresponds to a movie; nodes represent actors and edges indicate co-appearances in scenes.
\end{itemize}

\begin{table}[h]
\centering
\caption{Statistics of standard graph classification datasets used in the main paper.}
\label{tab:dataset_main}
\begin{tabular}{lrrrrrr}
\toprule
& \textbf{Ba2Motifs} & \textbf{BAMultiShapes} & \textbf{Mutagenicity} & \textbf{BBBP} & \textbf{NCI1} & \textbf{PROTEINS} \\
\midrule
\#Graphs & 1{,}000 & 1{,}000 & 4{,}337 & 2{,}050 & 4{,}110 & 1{,}113 \\
Avg.\ $|\mathcal{V}|$ & 25.0 & 40.0 & 30.3 & 23.9 & 29.9 & 39.1 \\
Avg.\ $|\mathcal{E}|$ & 51.0 & 87.0 & 30.8 & 51.6 & 32.3 & 145.6 \\
\bottomrule
\end{tabular}
\end{table}

\begin{table}[h]
\centering
\captionsetup{width=0.75\linewidth}
\caption{Statistics of additional graph and node classification datasets used in the supplementary material. Note that TreeGrid and BaShapes are single-graph node classification tasks, so $|\mathcal{V}|$ and $|\mathcal{E}|$ refer to the total size of the single graph.}
\label{tab:dataset_supp}
\begin{tabular}{lrrrrrr}
\toprule
& \textbf{MUTAG} & \textbf{BACE} & \textbf{HIV} & \textbf{IMDB-BINARY} & \textbf{TreeGrid} & \textbf{BaShapes} \\
\midrule
\#Graphs & 188 & 1{,}513 & 1{,}484  & 1{,}000 & 1 & 1 \\
Avg.\ $|\mathcal{V}|$ & 17.9 & 34.1 & 26.1 & 19.8 & 1{,}231 & 700 \\
Avg.\ $|\mathcal{E}|$ & 39.6 & 73.7 & 55.5 & 193.1 & 1{,}550 & 2{,}105 \\
\bottomrule
\end{tabular}
\end{table}

% To assess scalability, we also benchmark our approach and baselines on large-scale, real-world datasets from \cite{karateclub}: Reddit Threads, Twitch Egos, and GitHub Stargazers. Table~\ref{tab:large_datasets_stats} summarizes their statistics.

% \begin{itemize}
%     \item \textbf{Reddit Threads} (Reddit): Labeled as discussion-based or non-discussion-based. The task is to predict whether a thread is discussion-oriented.
%     \item \textbf{Twitch Egos} (Twitch): Ego networks of Twitch users. The task is to predict whether the central gamer plays a single game or multiple games.
%     \item \textbf{GitHub Stargazers} (GitHub): Social networks of developers who starred popular machine learning or web development repositories. The task is to classify whether a social network belongs to a web or machine learning repository.
% \end{itemize}

% \begin{table}[h]
% \centering
% \caption{Statistics of Graph Datasets: Nodes, Density, and Diameter}\
% \label{tab:large_datasets_stats}
% \begin{tabular}{lrrrrrrrr}
% \toprule
% Dataset & Graphs & \multicolumn{2}{c}{Nodes} & \multicolumn{2}{c}{Density} & \multicolumn{2}{c}{Diameter} \\
% \cline{3-4} \cline{5-6} \cline{7-8}
%                  &             & Min & Max & Min & Max & Min & Max \\
% \midrule
% Reddit      & 203,088 & 11  & 197  & 0.021 & 0.382 & 2 & 27 \\
% Twitch      & 127,094 & 14  & 452  & 0.038 & 0.967 & 1 & 12  \\
% GitHub      & 12,725  & 10  & 957 & 0.003 & 0.561 & 2 & 18 \\
% \bottomrule
% \end{tabular}
% \end{table}

\section{Experimental Setup, Training Details, and Baseline Replication}
\label{app:setup}

\subsection{Experimental Setup}
All experiments are conducted on an Ubuntu 22.04 LTS machine with 128 GB RAM, an Intel(R) Core(TM) i7-12700K processor @ 3.60GHz, and an NVIDIA GeForce RTX 5090.

% , using stratified sampling to preserve class balance across splits.

We employ an 8:2 train--test split for our approach. While certain baselines follow their original 8:1:1 configuration, we retain those settings to ensure faithful reproduction. Crucially, this comparison remains fair, as all methods operate under an identical 80\% training budget, ensuring parity in the structural information available for learning. All experiments are repeated with five different random seeds to ensure robustness. The implementation is carried out in PyTorch.

Notably, despite utilizing a mid-range CPU, our framework outperforms baselines optimized for high-end infrastructure. This highlights our model's efficiency, achieving state-of-the-art results through structural expressivity rather than excessive hardware acceleration.

\subsection{Training Details and Hyperparameters for \textsc{SymGraph}}
\label{app:training_details}

As for our approach, \textsc{SymGraph}, we employ the CART algorithm for all decision trees used in the framework. Table~\ref{tab:symgraph_hyperparams} summarizes the specific hyperparameters utilized for each dataset. 
The parameter \texttt{K\_HOPS} defines the radius of the local computational trees ($h_k$). 
We utilize a leaf-wise growth strategy (\texttt{TREE\_MODE} set to \texttt{'LEAVES'}), where the tree expands by splitting the leaf with the highest loss reduction rather than growing level-by-level (depth-wise).
For datasets requiring the feature optimization described in Algorithm~\ref{alg:evolutionary_search} (PROTEINS, Mutagenicity, NCI1, BBBP), we control the search process via \texttt{POPULATION\_SIZE} ($N_{pop}$) and \texttt{GENERATION} ($N_{gen}$).
Finally, \texttt{JUDGE\_MAX\_LEAVES} and \texttt{JUDGE\_CCP\_ALPHA} serve as constraints for the decision trees trained during the evolutionary search (the "Judge" models); these parameters ensure that the fitness evaluation favors candidate genomes that yield interpretable and generalizable rules.
Note that for the synthetic benchmarks (Ba2Motifs and BaMultiShapes), the structural motifs are sufficiently distinct that extensive evolutionary optimization was unnecessary.

\begin{table}[h]
\centering
\caption{Hyperparameter configuration for \textsc{SymGraph} models.}
\label{tab:symgraph_hyperparams}
% \resizebox scales the content to fit \textwidth. The '!' keeps the aspect ratio.
\resizebox{\textwidth}{!}{%
\begin{tabular}{lrrrrrrr}
\toprule
\textbf{Dataset} & \texttt{K\_HOPS} & \texttt{TREE\_MODE} & \texttt{MAX\_LEAVES\_CAP} & \texttt{POPULATION\_SIZE} & \texttt{GENERATION} & \texttt{JUDGE\_MAX\_LEAVES} & \texttt{JUDGE\_CCP\_ALPHA} \\
\midrule
\textbf{Ba2Motifs}     & 1 & \texttt{'LEAVES'} & --- & --- & --- & --- & --- \\
\textbf{BaMultiShapes} & 1 & \texttt{'LEAVES'} & --- & --- & --- & --- & --- \\
\textbf{PROTEINS}      & 1 & \texttt{'LEAVES'} & 16  & 100 & 5   & 48  & 0.001 \\
\textbf{Mutagenicity}  & 2 & \texttt{'LEAVES'} & 16  & 100 & 5   & 48  & 0.001 \\
\textbf{NCI1}          & 2 & \texttt{'LEAVES'} & 16  & 100 & 5   & 48  & 0.001 \\
\textbf{BBBP}          & 2 & \texttt{'LEAVES'} & 16  & 100 & 5   & 32  & 0.001 \\
\bottomrule
\end{tabular}%
}
\end{table}

\paragraph{Atomic Compositionality versus Deep Compression.}
Our framework deliberately restricts the structural receptive field to low hop counts ($K=1, 2$) to leverage the power of atomic symbolic composition. Unlike deep MPNNs that require stacking multiple layers to capture global context, often at the cost of over-smoothing or diluting precise structural signals, our approach treats global graph topology as the logical union of small, high-frequency predicates. By keeping $K$ low, we prevent the vocabulary explosion inherent to higher-order subgraph matching. This ensures our feature space remains dense and learnable rather than becoming sparse and unique to specific graphs. Consequently, the model reconstructs complex global properties (e.g., dense clusters) by aggregating precise, reusable local rules (e.g., the presence of many triangles), effectively bypassing the need for deep, opaque message-passing chains.

\subsection{Implementation of Message-Passing GNN Baselines}

To evaluate the classification performance of \textsc{SymGraph}, we benchmark it against established Message Passing Neural Network (MPNN) baselines, including \textsc{GraphSAGE}~\citep{hamilton2018}, \textsc{GIN}~\citep{GIN}, \textsc{GCN}~\citep{gcn}, and \textsc{GAT}~\citep{GAT}.

All baselines are optimized using AdamW with a learning rate of $10^{-3}$ and weight decay of $10^{-4}$. Models are trained for a maximum of 3000 epochs using a batch size of 128, a hidden dimension of 32, and a dropout rate of 0.2. To ensure efficiency, we employ early stopping with a patience of 200 epochs, monitoring training accuracy for convergence.

\subsection{Replication of Baseline Self-Explainable GNNs}

We benchmark \textsc{SymGraph} against a comprehensive collection of state-of-the-art self-explainable GNNs. To ensure a fair comparison, we utilize the official implementations released by the original authors, adhering to the default or recommended hyperparameters reported in their respective papers. We introduce minimal modifications only where necessary to integrate these models into our unified experimental protocol. Furthermore, we consulted with the original authors to verify that these adaptations remain faithful to the original methods.

% \textsc{PiGNN}, \textsc{LogiX-GIN},  \textsc{GIB}, \textsc{GNAN}, \textsc{KerGNN}

\textbf{\textsc{GNAN}} We encountered notable issues with \textsc{GNAN}~\citep{gnan}. In our experiments, the training loss consistently failed to converge, rendering the training process ineffective. Similar behavior has been implicitly noted in prior work~\citep{LogiXGIN}, where the model exhibited performance akin to a random classifier. Additionally, the training cost of \textsc{GNAN} was prohibitively high; even on modern hardware, each epoch required substantial time, making large-scale hyperparameter tuning impractical.

We communicated these observations to the original authors, who were unable to provide the original configuration used to generate the reported results. However, they confirmed that the released implementation is sensitive to numerical stability and suggested performing a grid search for hyperparameters. Despite dedicating significant resources to tuning and testing various configurations, we were unable to achieve convergence. Consequently, despite our best efforts, we could not identify a stable configuration to faithfully reproduce the reported performance of \textsc{GNAN} within a reasonable experimental budget.

\textbf{\textsc{KerGNN}} For \textsc{KerGNN}~\cite{kergnn}, we observed the shortest training time among all baseline methods. However, as the original paper explicitly recommends performing a hyperparameter grid search, we conducted an extensive search within the ranges suggested by the authors. To ensure fairness, we selected the optimal configuration based on validation performance under the same experimental protocol—a methodology we subsequently confirmed with the first author. While \textsc{KerGNN} is computationally efficient, its performance proved sensitive to hyperparameter selection; our reported results thus reflect the best performance achievable within the recommended search space and a reasonable computational budget.

\textbf{\textsc{LogiX-GIN}, \textsc{GIB} and \textsc{PiGNN}} For the remaining baseline methods, including \textsc{PiGNN}~\cite{pignn}, \textsc{GIB}~\cite{gib}, and \textsc{LogiX-GIN}~\cite{LogiXGIN}, the reproduction process was stable and straightforward. We were able to maintain the original model architectures, as well as the training and evaluation pipelines, with minimal changes.

Since certain datasets in our benchmark were not included in the original codebases, we modified the data-loading components to support them. For datasets without explicitly reported hyperparameters, we initialized settings based on similar datasets from the original papers and subsequently performed a grid search to select appropriate configurations. Overall, these methods were reliably reproduced under our experimental setup.

% Finaly, For all baseline methods, extensive optimization techniques were required to obtain competitive results, including k-fold cross-validation, learning-rate warmup, and careful learning-rate tuning. These procedures were necessary to stabilize training and achieve the reported performance.
% In contrast, our method does not rely on such exhaustive training tricks and remains stable under default settings, highlighting its robustness and ease of deployment.

\subsection{Reproduction of \textsc{LogiX-GIN}'s Rule Explanations}
\label{app:LogiXGIN_BBBP}

For Mutagenicity, we utilize the original explanations (Figure 1) reported in the paper \cite{LogiXGIN}. To obtain explanations for the BBBP dataset, we employed the official \textsc{LogiX-GIN} repository.\footnote{\url{https://github.com/spideralessio/LogiX-GIN}} Following the recommended configuration, we trained a two-layer Backbone and \textsc{LogiX} model, achieving accuracy comparable to reported results. While the original paper suggests a five-layer architecture may yield superior performance, we found the increased depth rendered rule extraction computationally prohibitive (exceeding 10 hours on an RTX 5090).

Furthermore, as the repository lacks automated scripts for rule summarization, and we received no response to our email inquiries regarding the authors' full extraction methodology, we implemented a manual filtering and translation protocol. We employed the provided notebook (\texttt{nbs/LayerWiseRules.ipynb}) to generate Disjunctive Normal Forms (DNFs) representing subgraph features, along with visualizations as shown in Figure~\ref{fig:all_ids}. To facilitate interpretability, we mapped internal node IDs to their atomic identities and manually filtered trivial outputs (e.g., empty/complete graphs or disconnected atoms). The resulting meaningful patterns, such as six-carbon rings and N-C-C sequences, are visualized in Figure~\ref{fig:all_atoms}. We verified the consistency of these rules by examining 100 correctly classified molecules from the BBBP validation set.

\subsection{Comparative Analysis and Reproduction of \textsc{GraphTrail}}
We evaluated \textsc{GraphTrail} \citep{GraphTrail} using the authors' official implementation. Following the original study, we adopted default hyperparameter settings and consulted the authors to verify our experimental setup, ensuring a faithful reproduction on the Mutagenicity and BBBP datasets.

Our evaluation revealed that a significant portion of the generated subgraph explanations were chemically invalid. For example, we observed instances where hydrogen atoms were placed within aromatic rings—a structural impossibility. The authors confirmed that such artifacts can occur and noted that for their published qualitative results, subgraphs were manually redrawn to rectify these errors. This highlights a critical limitation: the existing pipeline lacks the inherent structural constraints necessary to guarantee chemically valid explanations without manual intervention. In contrast, \textsc{SymGraph} addresses this by generating rules that are natively aligned with chemical principles.

% --- Figure 1: ID Images ---
\begin{figure}[htbp]
    \centering
    % Top Row (1-4)
    \includegraphics[width=0.24\linewidth]{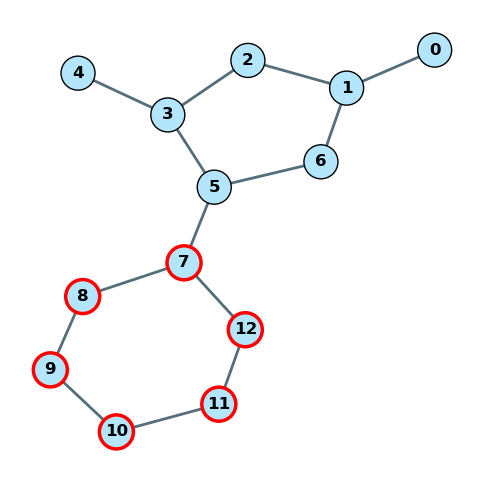}\hfill
    \includegraphics[width=0.24\linewidth]{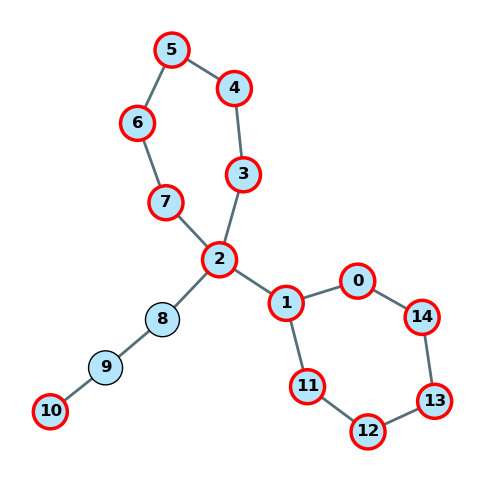}\hfill
    \includegraphics[width=0.24\linewidth]{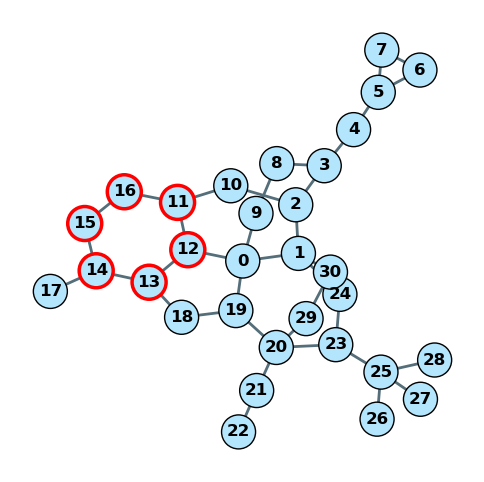}\hfill
    \includegraphics[width=0.24\linewidth]{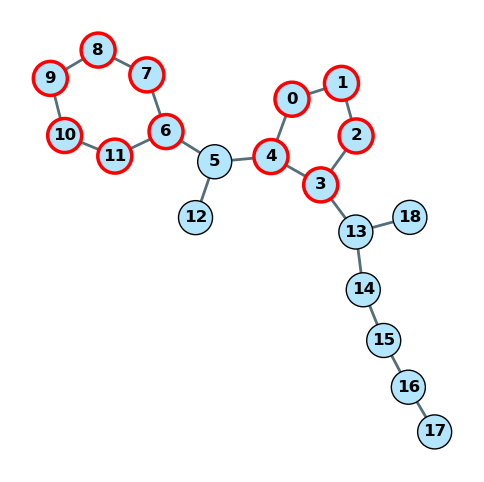}
    
    \vspace{0.5em} % Vertical space between rows
    
    % Bottom Row (5-8)
    \includegraphics[width=0.24\linewidth]{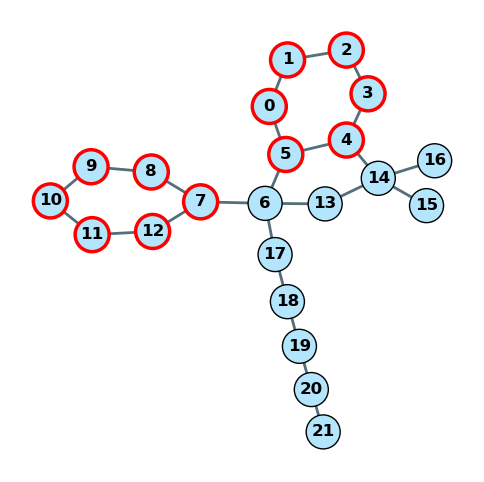}\hfill
    \includegraphics[width=0.24\linewidth]{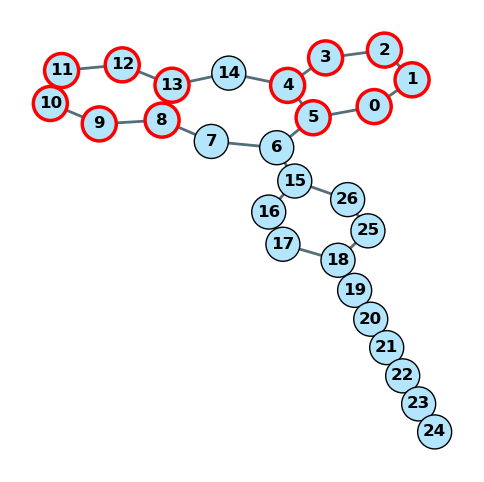}\hfill
    \includegraphics[width=0.24\linewidth]{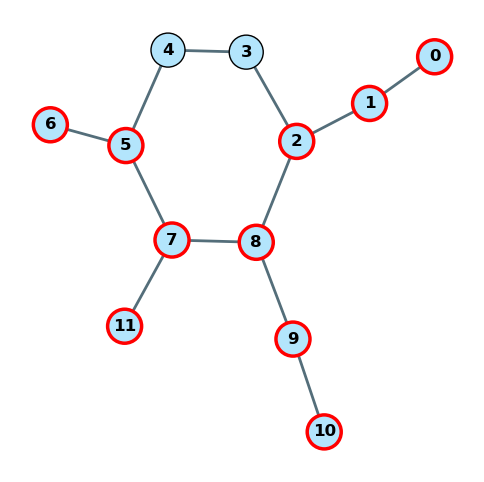}\hfill
    \includegraphics[width=0.24\linewidth]{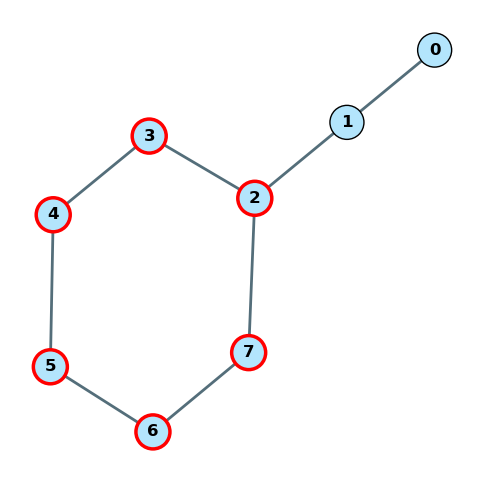}
    
    \caption{\textbf{Raw subgraph visualization via Node IDs.} These figures display partially the raw output from the \textsc{LogiX-GIN} extraction process. The highlighted regions correspond to the internal node indices (IDs) identified in the DNF rules as critical for classification, prior to semantic mapping.}
    \label{fig:all_ids}
\end{figure}

% --- Figure 2: Atom Images ---
\begin{figure}[htbp]
    \centering
    % Top Row (1-4)
    \includegraphics[width=0.24\linewidth]{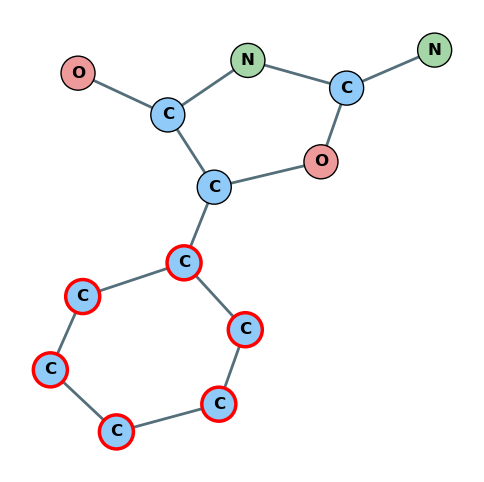}\hfill
    \includegraphics[width=0.24\linewidth]{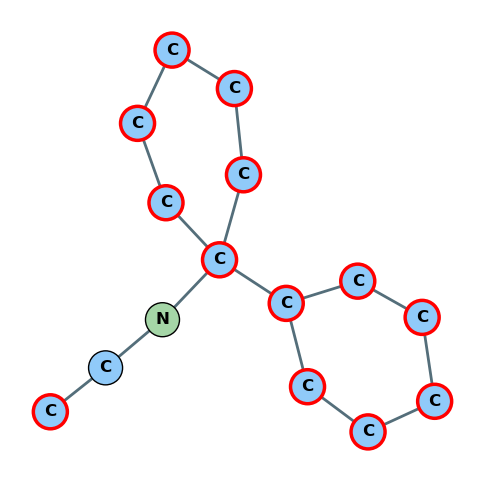}\hfill
    \includegraphics[width=0.24\linewidth]{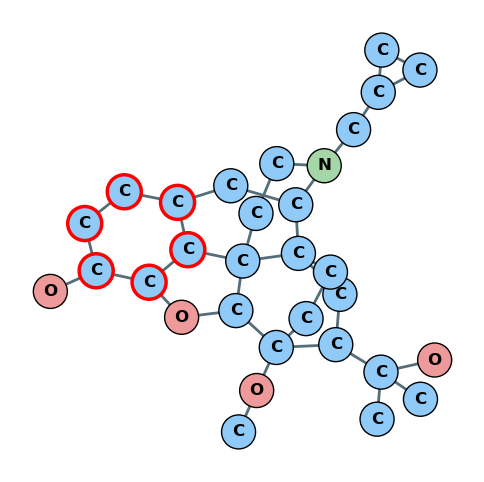}\hfill
    \includegraphics[width=0.24\linewidth]{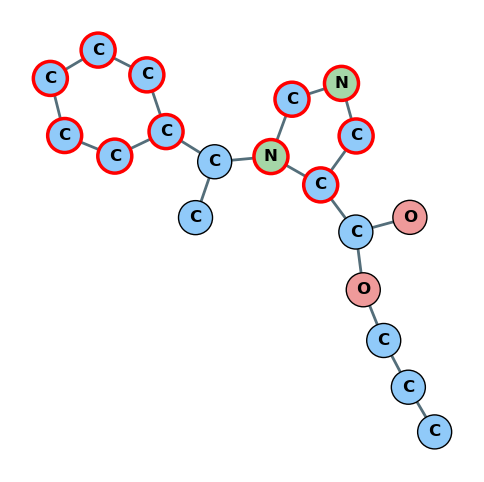}
    
    \vspace{0.5em} % Vertical space between rows
    
    % Bottom Row (5-8)
    \includegraphics[width=0.24\linewidth]{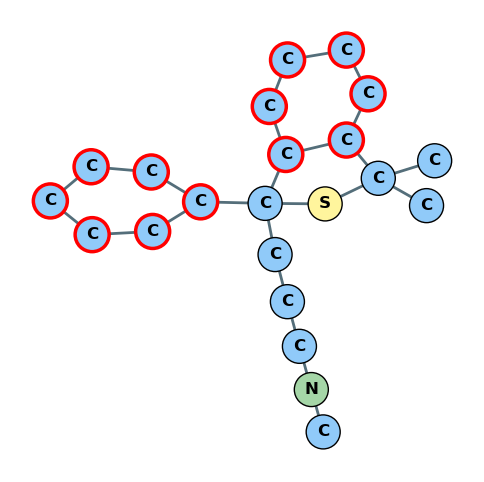}\hfill
    \includegraphics[width=0.24\linewidth]{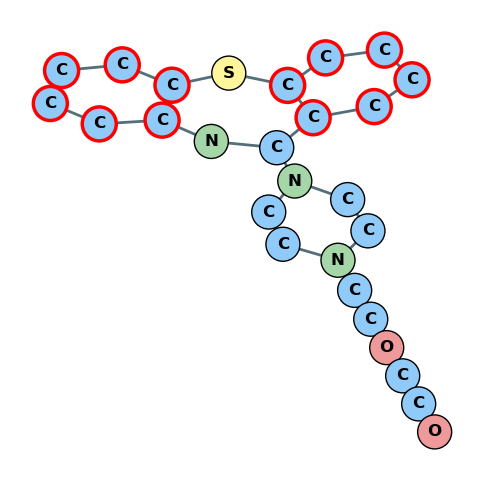}\hfill
    \includegraphics[width=0.24\linewidth]{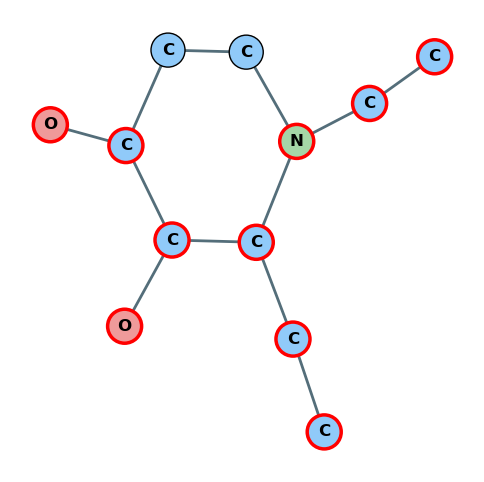}\hfill
    \includegraphics[width=0.24\linewidth]{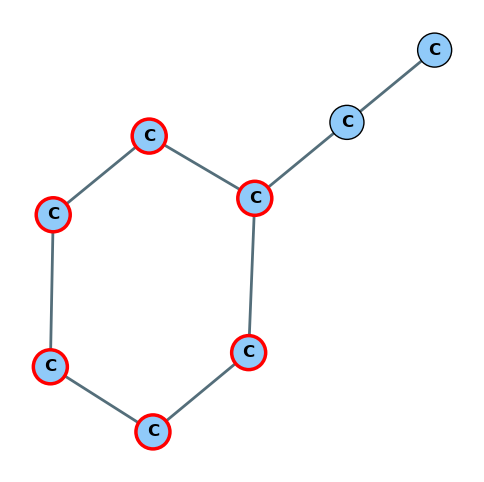}
    
    \caption{\textbf{Semantically interpreted visualization with atom identities.} The corresponding molecular visualization of the graphs in Figure~\ref{fig:all_ids}. By mapping internal IDs to atomic identities and manual filtering, we collect these highlighted subgraphs that are easier to interpret.}
    \label{fig:all_atoms}
\end{figure}

\section{Canonical Representation via Stable Orbit Decomposition}
\label{app:topo_role_encoding_details}

To create a canonical representation $\mathbf{Z}_v$ that is invariant to node indexing and input permutation, we partition the local neighborhood $S_v$ into orbits and establish a consistent ordering using Algorithm~\ref{alg:stable_orbits}. 

Unlike heuristic methods that rely on arbitrary anchor nodes or distances, our approach is purely topological. We utilize the \emph{Weisfeiler-Lehman (WL) structural signature} to strictly define the sort order. Furthermore, we explicitly distinguish between \emph{Node Orbits} (sets of structurally equivalent atoms) and \emph{Edge Orbits} (sets of structurally equivalent bonds), ensuring that interaction information is preserved with high fidelity.

The ordering and construction scheme proceeds as follows:
\begin{enumerate}
    \item \textbf{WL Hashing:} We compute a stable WL-hash $h(u)$ for every node $u \in S_v$. Nodes with identical hashes are grouped into initial Node Orbits.
    \item \textbf{Deterministic Sorting:} These Node Orbits are sorted lexicographically by their hash strings. This guarantees that a specific topological role always appears at a fixed index in the feature vector.
    \item \textbf{Edge Orbit Induction:} Edge Orbits are induced by examining the rank (sort index) of the endpoints. An edge connecting a node of Rank-$i$ to a node of Rank-$j$ forms a deterministic edge role, which is also sorted canonically.
    \item \textbf{Redundancy Filtering:} To maintain a compact representation, nodes that participate in symmetric Edge Orbits are "consumed" and removed from the list of independent Node Orbits. This prioritizes explicit interaction features over isolated node features.
\end{enumerate}

\begin{algorithm2e}[!h]
    \caption{Stable Node-Edge Orbit Decomposition}
    \label{alg:stable_orbits}
    \small
    \DontPrintSemicolon
    \SetKwProg{Fn}{Function}{}{end}
    \KwIn{Graph $S_v$ (local subgraph)}
    \KwOut{Canonical List of Orbits $\mathcal{O}_{final}$}
    \SetKwFunction{SOD}{StableOrbitDecomposition}
    \Fn{\SOD{$S_v$}}{
        
        \tcp{Step 1: Structural Hashing}
        $H \gets \textit{ComputeWLHashes}(S_v)$ \;
        $\mathcal{G}_{node} \gets \text{GroupBy}(S_v.\text{nodes}, \text{key}=H)$ \;
        \tcp{Sort by Hash String (Lexicographical)}
        $\mathcal{O}_{node} \gets \text{Sort}(\mathcal{G}_{node}.values(), \text{key}=H)$ \;
        
        \tcp{Step 2: Edge Orbit Induction}
        $\mathcal{O}_{edge} \gets \emptyset$ \;
        \ForEach{edge $(u,v) \in S_v.\text{edges}$}{
             $Sig \gets \text{Tuple}(\text{Rank}(u), \text{Rank}(v))$ \;
             $\text{AddToGroup}(\mathcal{O}_{edge}, \text{key}=Sig, \text{item}=(u,v))$ \;
        }
        $\mathcal{O}_{edge} \gets \text{Sort}(\mathcal{O}_{edge})$ \;
        
        \tcp{Step 3: Filtering \& Merge}
        $V_{consumed} \gets \{u, v \mid (u,v) \in \cup \mathcal{O}_{edge}\}$ \;
        $\mathcal{O}_{node}' \gets \{O \in \mathcal{O}_{node} \mid O \not\subseteq V_{consumed} \}$ \;
        
        \Return{$\mathcal{O}_{node}' + \mathcal{O}_{edge}$}
    }
\end{algorithm2e}

\begin{theorem}[Orbit Sorting Consistency]
\label{thm:orbit_sorting_consistency}
The WL-based sorting scheme employed in Algorithm~\ref{alg:stable_orbits} produces a deterministic, isomorphism-invariant ordering of orbits. For any two isomorphic subgraphs $S \cong S'$, the sequence of computed hashes and induced edge signatures is identical, ensuring the generated feature vector $\mathbf{Z}(S) = \mathbf{Z}(S')$.
\end{theorem}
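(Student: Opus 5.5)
The plan is to fix an isomorphism $\sigma: S \to S'$ and push it through each step of Algorithm~\ref{alg:stable_orbits}. At every stage I will show that the object computed on $S'$ is the $\sigma$-image of the object computed on $S$, and that every ordering key is a function of the isomorphism class alone. Concretely, I will treat $S$ as a rooted, feature-free graph: the root $v$ carries a distinguished initial label and the hashing uses only topology. Node features $\mathbf{X}$ enter only at the final AGGREGATE step. There I will assume, as the statement implicitly requires, that $\sigma$ also preserves features, i.e.\ $\mathbf{X}'_{\sigma(u)} = \mathbf{X}_u$. Otherwise $\mathbf{Z}(S)=\mathbf{Z}(S')$ is false in general, so I will state this hypothesis explicitly.

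\textbf{Step 1 (WL hashes are equivariant).} I will prove by induction on the refinement round $t$ that $c^{(t)}_{S'}(\sigma(u)) = c^{(t)}_S(u)$. The base case holds because initial colors depend only on labels preserved by $\sigma$, such as rootedness or degree. For the inductive step, $\sigma$ maps the neighbourhood of $u$ bijectively onto the neighbourhood of $\sigma(u)$. The multisets of neighbour colors therefore coincide, and the injective hash of (own color, multiset) agrees. Running a fixed number of rounds then gives $H'(\sigma(u)) = H(u)$ for every node $u$.

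\textbf{Step 2 (node orbits and sorting).} It follows that GroupBy produces classes with $\sigma(C) = C'$, and the key sets $\{H(u)\}$ and $\{H'(u')\}$ are the same set of strings. Lexicographic sorting on a set of distinct strings is a deterministic total order, so the $i$-th class of $S'$ is the $\sigma$-image of the $i$-th class of $S$, and $\mathrm{Rank}'(\sigma(u)) = \mathrm{Rank}(u)$. Here I note that grouping is by WL color, so these "orbits" refine $\mathrm{Aut}$-orbits only coarsely. This does not matter for the claim, because invariance only needs equivariance of the partition, not that it equals $\mathrm{Aut}(S)$-orbits.

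\textbf{Step 3 (edge orbits and filtering).} This is the main obstacle. The algorithm uses the ordered tuple $(\mathrm{Rank}(u),\mathrm{Rank}(v))$, and for undirected edges that tuple depends on the storage orientation of the edge. Two isomorphic subgraphs with different internal indexing could then produce $(i,j)$ versus $(j,i)$. I will resolve this by interpreting the signature as the sorted pair $(\min,\max)$, or equivalently the unordered multiset of endpoint ranks, and will remark that the implementation must canonicalize it this way. With that convention, Step 2 gives $\mathrm{Sig}'(\sigma(e)) = \mathrm{Sig}(e)$, so edge groups correspond under $\sigma$ and sort identically. The consumed set then satisfies $V'_{\mathrm{consumed}} = \sigma(V_{\mathrm{consumed}})$. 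The filter $O \not\subseteq V_{\mathrm{consumed}}$ commutes with $\sigma$, so $\mathcal{O}_{final}(S') = \sigma(\mathcal{O}_{final}(S))$ position by position. Tie-breaking is the only other risk. I will check that no ties arise: node keys are distinct by construction, and edge keys are distinct pairs of integers.

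\textbf{Step 4 (feature vector).} For each position $k$, the $k$-th orbit $O'_k$ of $S'$ equals $\sigma(O_k)$. AGGREGATE, whether histogram or mean, is permutation-invariant over its multiset input. Using $\mathbf{X}'_{\sigma(u)}=\mathbf{X}_u$, the multiset $\{\mathbf{X}'_w : w \in O'_k\}$ equals $\{\mathbf{X}_u : u \in O_k\}$. Edge orbits are handled the same way, with edge features. Concatenation in the common canonical order therefore yields $\mathbf{Z}(S)=\mathbf{Z}(S')$.
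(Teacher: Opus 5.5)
Your proposal is correct and takes the same route as the paper's proof: you transport the isomorphism $\sigma$ through the WL hashing, the lexicographic sort of hash strings, the rank-induced edge signatures, and the containment-based filter, so that $\mathcal{O}_{final}(S')=\sigma(\mathcal{O}_{final}(S))$ position by position and hence $\mathbf{Z}(S)=\mathbf{Z}(S')$. What you add are assumptions the paper uses silently: that $\sigma$ preserves node and edge features (genuinely necessary, since otherwise the statement would contradict Case~2 of Proposition~\ref{prop:local_expressiveness}), and that the signature $(\mathrm{Rank}(u),\mathrm{Rank}(v))$ of an undirected edge is taken unordered; the only slip is wording in Step~2, where WL colour classes are a coarsening of the $\mathrm{Aut}$-orbit partition (each class is a union of $\mathrm{Aut}$-orbits), not a refinement.
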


\begin{proof}
Let $H(v)$ denote the stable WL-hash of node $v$. For any isomorphism $\sigma: V_S \to V_{S'}$, it holds that $H(v) = H(\sigma(v))$ by the definition of the Weisfeiler-Lehman test. Consequently, the set of unique hash strings present in $S$ and $S'$ is identical. Since $\mathcal{O}_{node}$ is sorted lexicographically by these hash strings, the resulting sequence of Node Orbits is identical for both graphs. Furthermore, an Edge Orbit in $S$ is defined by the pair of ranks $(\text{Rank}(u), \text{Rank}(v))$. Since node ranks are derived from the sorted Node Orbits, they are invariant under isomorphism. Sorting these signatures lexicographically yields an identical sequence of Edge Orbits. Finally, since the filtering step relies solely on set containment within these deterministic orbits, the final output list $\mathcal{O}_{final}$ is invariant.
\end{proof}

\section{Node Classification}
\label{app:node_cls}

For node-level tasks, the local symbolic abstraction provided by the Structure-Aware Predicate $p_j = (h, q)$ is sufficient. Since the semantic state $q$ serves as a discriminative signal learned via direct supervision, the decision process can be formalized as a logical disjunction: $\Phi_y(v) = \bigvee_{p_j \in \mathcal{P}_y} p_j(v)$. 

However, this logical formulation implies exact structural matching, which may limit generalization in the presence of noisy features or structural perturbations. To enhance robustness, we instantiate this logic via the Orbit-Aware Feature Vector $\mathbf{Z}_v$ (Definition \ref{def:orbit_feature}) to create a uniform, robust encoding of a node's neighborhood.

Formally, let $S_v$ denote the local neighborhood of node $v$. We partition $S_v$ into its set of topological orbits and establish a canonical sequence $\mathcal{O}(S_v) = (O_1, \dots, O_B)$ by \emph{sorting the orbits according to their stable Weisfeiler-Lehman (WL) structural signatures}, as detailed in Appendix~\ref{app:topo_role_encoding_details}. This ensures that topologically identical roles (e.g., triangle corners vs. path endpoints) are always assigned to the same fixed position in the feature vector, regardless of their frequency.

To instantiate the generic aggregation in Definition \ref{def:orbit_feature} for node classification, we construct a local descriptor $\mathbf{z}_{O_j}$ for each orbit $O_j$:
\begin{equation}
    \mathbf{z}_{O_j} = \left[ \mathbb{I}(v \in O_j) \oplus |O_j| \oplus \frac{1}{|O_j|} \sum_{u \in O_j} \mathbf{X}_u \right]
\end{equation}
where $\mathbb{I}(\cdot)$ indicates if the center node $v$ belongs to orbit $O_j$, $|O_j|$ represents the structural mass (cardinality), and the final term computes the mean feature aggregation. Note that while $|O_j|$ captures the \emph{prevalence} of a structure, the identity of the structure is guaranteed by the WL-based sorting order.

To map these variable-length descriptors into a fixed feature space compatible with standard classifiers, we employ a structural vocabulary registry $\mathcal{V}$ keyed by the global hash of the subgraph, $h(S_v)$. The final global representation $\mathbf{Z}_v^{\text{global}}$ is constructed as a sparse vector:
\begin{equation}
    \mathbf{Z}_v^{\text{global}} = \text{SparsePlace}\left( \bigoplus_{j=1}^B \mathbf{z}_{O_j}, \;\; \text{idx} = \mathcal{V}[h(S_v)] \right)
\end{equation}
This encoding transforms the node classification problem into a linear decision boundary over topologically distinct orbit configurations, effectively balancing strict structural specificity with feature-based generalization.

\section{Additional Evaluation Results}
\label{app:add_results}

\begin{table*}[th]
\centering
\caption{Test accuracy (\%) and runtime ($10^2$ seconds) on additional benchmarks for XAI graph and node classification. Results are averaged over five random seeds. \textbf{Bold} indicates the best performance, and \underline{underlined} indicates the second-best. \textsc{SymGraph}-DT and \textsc{SymGraph}-RF denote decision tree and random forest variants of our framework, respectively. --- indicates that the method is not supported.}
\label{tab:graph_acc}
\vspace{-5pt}

\resizebox{\textwidth}{!}{%
\begin{tabular}{@{}l rr rr rr rr rr rr@{}}
\toprule
& \multicolumn{2}{c}{\textbf{MUTAG}} 
& \multicolumn{2}{c}{\textbf{BACE}} 
& \multicolumn{2}{c}{\textbf{ClinTox}} 
& \multicolumn{2}{c}{\textbf{IMDB}} 
& \multicolumn{2}{c}{\textbf{TreeGrid}} 
& \multicolumn{2}{c}{\textbf{BaShapes}} \\

\cmidrule(lr){2-3} \cmidrule(lr){4-5} \cmidrule(lr){6-7} \cmidrule(lr){8-9} \cmidrule(lr){10-11} \cmidrule(lr){12-13}

\textbf{Model} 
& Acc$\uparrow$ & Time$\downarrow$ 
& Acc$\uparrow$ & Time$\downarrow$ 
& Acc$\uparrow$ & Time$\downarrow$ 
& Acc$\uparrow$ & Time$\downarrow$ 
& Acc$\uparrow$ & Time$\downarrow$ 
& Acc$\uparrow$ & Time$\downarrow$ \\
\midrule

\textsc{GCN}
& 74.7 $\pm$ 6.1&0.5 $\pm$ 0.2
& 80.7 $\pm$ 1.7 &4.8 $\pm$ 1.1
& 92.3 $\pm$ 0.6 & 8.0 $\pm$ 4.6
& 49.4 $\pm$ 2.1 & 0.3 $\pm$ 0.0
& 71.6 $\pm$ 1.1 & 44.1 $\pm$ 0.8
& 85.5 $\pm$ 2.0 & 34.5 $\pm$ 1.7 \\

\textsc{GIN}
& \underline{88.4 $\pm$ 3.0} & \underline{0.4 $\pm$ 0.0}
&82.4 $\pm$ 2.7 & \underline{3.8 $\pm$ 1.0}
&92.3 $\pm$ 0.7 & \underline{7.8 $\pm$ 2.4}
& 69.5 $\pm$ 1.9 & 0.7 $\pm$ 0.2
&82.6 $\pm$ 0.2 & 34.1 $\pm$ 4.1
& \underline{96.9 $\pm$ 1.0} & 27.6 $\pm$ 1.3 \\

\textsc{GraphSAGE}
& 75.3 $\pm$ 4.8 & 0.4 $\pm$ 0.1
& \underline{82.6 $\pm$ 2.2} & 4.4 $\pm$ 0.8
& 92.2 $\pm$ 1.0 & 8.2 $\pm$ 0.5
& 48.6 $\pm$ 1.5 & \underline{0.2 $\pm$ 0.0}
& 41.5 $\pm$ 2.4 & \underline{24.9 $\pm$ 1.6}
& 42.9 $\pm$ 1.7 & \underline{21.5 $\pm$ 0.2} \\

\textsc{GAT}
& 75.3 $\pm$ 3.5 &0.5  $\pm$ 0.2
&81.7 $\pm$ 1.1 & 6.3 $\pm$ 1.0
&92.6 $\pm$ 0.9 & 9.4 $\pm$ 3.8
& 50.6 $\pm$ 2.1 & 0.3 $\pm$ 0.0
& 41.5 $\pm$ 2.4 & 32.6 $\pm$ 2.0
& 42.9 $\pm$ 1.7 & 28.7 $\pm$ 1.3 \\

% \midrule

% \textsc{GNAN}
% & -- $\pm$ -- & -- $\pm$ --
% & -- $\pm$ -- & -- $\pm$ --
% & -- $\pm$ -- & -- $\pm$ --
% & -- $\pm$ -- & -- $\pm$ --
% & -- $\pm$ -- & -- $\pm$ --
% & -- $\pm$ -- & -- $\pm$ -- \\

\textsc{KerGNN}
& 69.0 $\pm$ 2.2& 19.5 $\pm$ 0.4
& 57.0 $\pm$ 2.7 & 91.8 $\pm$ 0.1

& 92.4 $\pm$ 1.0 & 10.3 $\pm$ 1.0
& 51.3 $\pm$ 1.6 & 65.9 $\pm$ 0.2
& --- & ---
& --- & --- \\

\textsc{GIB}
& 85.3 $\pm$ 8.6 & 21.1 $\pm$ 0.3
& 64.2 $\pm$ 9.0 & 30.5 $\pm$ 0.6
& 92.3 $\pm$ 0.7 & 10.4 $\pm$ 1.2
& 69.2 $\pm$ 8.6 & 21.1 $\pm$ 0.3
& --- & ---
& --- & --- \\

\textsc{PiGNN}
& 77.0 $\pm$ 7.5 & 4.1 $\pm$ 0.1
& 73.4 $\pm$ 4.5 & 49.8 $\pm$ 0.3
& 91.4 $\pm$ 0.6 & 65.4 $\pm$ 3.4
& 64.7 $\pm$ 1.4 & 20.4 $\pm$ 0.7
& 82.4 $\pm$ 6.8 &35.9 $\pm$  1.4
& 85.1 $\pm$ 1.4 &  72.4 $\pm$ 0.7
\\

\textsc{LogiX-GIN}
& 87.4 $\pm$ 2.6 & 6.9 $\pm$ 0.2
& 76.6 $\pm$ 3.1 & 50.7 $\pm$ 0.3
& 92.3 $\pm$ 2.3 & 70.4 $\pm$ 2.7
& 63.6 $\pm$ 4.0 & 12.8 $\pm$ 0.3
& \underline{96.0 $\pm$ 1.0}& 40.7 $\pm$ 0.1
& 94.3 $\pm$ 1.3 & 152.4 $\pm$ 0.1 \\

\midrule

\textsc{SymGraph}-DT
& 82.6 $\pm$  2.3 & \multirow{2}{*}{ \textbf{0.2 $\pm$ 0.0}}
& 80.6  $\pm$ 1.5 & \multirow{2}{*}{\textbf{ 1.1 $\pm$ 0.0 }}
& \textbf{92.9 $\pm$  1.1} & \multirow{2}{*}{\textbf{ 1.1 $\pm$ 0.0}}
& \underline{74.5 $\pm$  2.4}  & \multirow{2}{*}{\textbf{0.3 $\pm$ 0.0}}
& \textbf{100.0 $\pm$ 0.0} & \multirow{2}{*}{\textbf{0.1 $\pm$ 0.0}}
& \textbf{100.0 $\pm$ 0.0} & \multirow{2}{*}{\textbf{0.3 $\pm$ 0.0}} \\

\textsc{SymGraph}-RF
& \textbf{90.5 $\pm$  2.4}  &
& \textbf{83.4 $\pm$ 1.0}  &
& \underline{92.8 $\pm$ 0.9}  &
&   \textbf{78.6 $\pm$ 4.3} &
& \textbf{100.0 $\pm$ 0.0} &
& \textbf{100.0 $\pm$ 0.0} \\

\bottomrule
\end{tabular}
}
\end{table*}

\section{Evolutionary Search Details}
\label{app:ga_algo}

In Section~\ref{sec:training}, we formulated the training of \textsc{SymGraph} as a combinatorial optimization problem over the genome vector $\boldsymbol{\lambda} \in \{1, \dots, \Lambda_{\max}\}^K$. Directly optimizing this vector is computationally prohibitive, as evaluating a single candidate vocabulary configuration traditionally requires retraining the local models for every topological pattern to determine the new semantic states.

To resolve this evaluation bottleneck, we introduce an Evolutionary Search strategy accelerated by \emph{Master Tree Pre-Caching}. For each topological pattern $h_k$, we pre-train a single Master Decision Tree expanded to the maximum allowable granularity $\Lambda_{\max}$. We then cache the resulting leaf index assignments for every node $v$ across all pruning levels into a lookup table matrix $\mathbf{T}_k$. For implementation efficiency, these matrices are organized into a single \emph{Master Lookup Tensor} $\mathbf{T}$ of dimensions $K \times N \times \Lambda_{\max}$. This transformation allows the \textsc{Fitness} function to reconstruct the global feature matrix $\mathbf{V}(\mathcal{D}, \boldsymbol{\lambda})$ via $O(1)$ lookup operations: $q(\lambda_k) = \mathbf{T}[k, v, \lambda_k]$. By bypassing the local retraining phase, we can evaluate thousands of candidate genomes in seconds.

The optimization procedure follows a generational cycle as detailed in Algorithm~\ref{alg:evolutionary_search}. We employ a population $\mathcal{S}$ of genome vectors, using elitism to preserve high-performing configurations. To explore the search space, we utilize tournament selection and uniform crossover. Crucially, we use \emph{creep mutation}, which performs small increments or decrements $(\pm 1)$ to a gene's value. This reflects the ordinal nature of granularity levels, where adjacent levels represent similar semantic resolutions. The objective function maximizes validation accuracy $\Pi$ while applying a $\ell_1$ penalty to control vocabulary inflation:
\begin{equation}
\label{eq:fitness_final}
    \text{Fitness}(\boldsymbol{\lambda}) = \Pi\big( F_{\phi}(\mathbf{V}(\mathcal{D}, \boldsymbol{\lambda})) \big) - \gamma \|\boldsymbol{\lambda}\|_1
\end{equation}

\begin{algorithm2e}[!h]
    \caption{Evolutionary Search with Master Tree Lookup}
    \label{alg:evolutionary_search}
    \small
    \DontPrintSemicolon
    \SetKwProg{Fn}{Function}{}{end}
    \SetKwFunction{Optimize}{EvolutionaryOptimization}
    
    \KwIn{Master Lookup Tensor $\mathbf{T}$, Train Labels $Y_{train}$, Max Granularity $\Lambda_{\max}$, Parameters ($N_{pop}, N_{gen}, \mu, \gamma$)}
    \KwOut{Optimal Genome Vector $\boldsymbol{\lambda}^*$}

    \Fn{\Optimize{$\mathbf{T}, Y_{train}$}}{
        
        \tcp{Initialize population: random granularity levels}
        $\mathcal{S} \gets \textit{InitializePopulation}(N_{pop}, \Lambda_{\max})$ \;
        $\mathbf{Score} \gets \textit{CalculateFitness}(\mathcal{S}, \mathbf{T}, Y_{train}, \gamma)$ \;
        $\boldsymbol{\lambda}^*_{best} \gets \mathcal{S}[\arg\max(\mathbf{Score})]$ \;
        
        \For{$t \gets 1$ \KwTo $N_{gen}$}{
            $\mathcal{S}_{next} \gets \textit{GetTopK}(\mathcal{S}, \mathbf{Score}, k=2)$ \tcc*{Elitism}
            
            \While{$|\mathcal{S}_{next}| < N_{pop}$}{
                $p_1 \gets \textit{TournamentSelect}(\mathcal{S}, \mathbf{Score})$ \;
                
                \eIf{$\text{Random}() < 0.7$}{
                    $p_2 \gets \textit{TournamentSelect}(\mathcal{S}, \mathbf{Score})$ \;
                    $c \gets \textit{UniformCrossover}(p_1, p_2)$ \;
                }{
                    $c \gets p_1$ \;
                }
                
                $c \gets \textit{CreepMutate}(c, \mu)$ \tcc*{Small $\pm 1$ adjustments}
                $\mathcal{S}_{next}.\text{append}(c)$ \;
            }
            
            $\mathcal{S} \gets \mathcal{S}_{next}$ \;
            $\mathbf{Score} \gets \textit{CalculateFitness}(\mathcal{S}, \mathbf{T}, Y_{train}, \gamma)$ \;
            
            \If{$\max(\mathbf{Score}) > \text{Fitness}(\boldsymbol{\lambda}^*_{best})$}{
                $\boldsymbol{\lambda}^*_{best} \gets \mathcal{S}[\arg\max(\mathbf{Score})]$ \;
            }
        }
        \Return{$\boldsymbol{\lambda}^*_{best}$}
    }
\end{algorithm2e}

\section{Proofs on Expressiveness}
\label{app:proofs}

\subsection{Proof of Proposition \ref{prop:local_expressiveness} (Local Expressiveness)}
\label{app:proof_local}

\textbf{Proposition.} \textit{The tuple predicate $p_j = (h, q)$ distinguishes rooted subgraphs that are indistinguishable by standard MPNNs, specifically in cases where: (1) topological structures are distinct but standard MPNNs fail to distinguish them (e.g., due to the 1-WL limitation on regular graphs), or (2) topological structures are identical but feature distributions differ across non-equivalent structural orbits.}

\begin{proof}
We prove this by identifying two distinct classes of graph pairs $(G_1, G_2)$ rooted at nodes $v_1, v_2$ where the standard MPNN update produces identical embeddings $\mathbf{x}_{v_1}^{(L)} = \mathbf{x}_{v_2}^{(L)}$, but our method generates distinct predicate tuples $p_1 \neq p_2$.

\paragraph{Case 1: Structural Ambiguity (Failure on Regular Graphs).}
Standard MPNNs are bounded by the expressiveness of the 1-WL test, meaning they fail to distinguish non-isomorphic graphs that are $k$-regular with the same number of nodes and identical constant features (e.g., a graph consisting of two disjoint triangles vs. a single hexagon).
\begin{enumerate}
    \item \textbf{MPNN Failure:} Let $G_1$ and $G_2$ be two non-isomorphic $k$-regular graphs. The standard MPNN update $\mathbf{x}^{(l+1)}_v = \text{UPD}(\mathbf{x}^{(l)}_v, \sum_{u \in \mathcal{N}(v)} \mathbf{x}^{(l)}_u)$ converges to the same fixed point for all nodes because every node has degree $k$ and receives identical messages from neighbors with identical features. Thus, for any nodes $v_1 \in G_1, v_2 \in G_2$, MPNNs assign $\mathbf{x}_{v_1}^{(L)} = \mathbf{x}_{v_2}^{(L)}$.
    \item \textbf{Our Distinction via $\mathcal{H}(v)$:} Our method computes the discrete topological signature $\mathcal{H}(v) = \text{Hash}(S_v)$. Provided the hashing function acts as a canonical identifier for the induced subgraph $S_v$ (e.g., via canonical string encoding or WL-refinement of the computation graph), and given $S_{v_1} \not\cong S_{v_2}$, the hashes must differ:
    \[ h_1 = \mathcal{H}(v_1) \neq \mathcal{H}(v_2) = h_2 \]
    Consequently, the resulting predicates $p_1 = (h_1, q_1)$ and $p_2 = (h_2, q_2)$ are distinct ($p_1 \neq p_2$), distinguishing the subgraphs where MPNNs fail.
\end{enumerate}

\paragraph{Case 2: Feature Permutation (Orbit-Awareness).}
Consider two rooted subgraphs $S_1, S_2$ rooted at $v_1, v_2$ that are \textbf{topologically isomorphic} (i.e., $\mathcal{H}(v_1) = \mathcal{H}(v_2) = h$) but differ in feature arrangement.
\begin{enumerate}
    \item \textbf{Setup:} Let the neighbors of the root be partitioned into two distinct orbits $O_A$ and $O_B$ (where nodes in $O_A$ cannot be mapped to nodes in $O_B$ by any automorphism). Let $|O_A| = |O_B| = k$. Consider distinct feature values $\alpha \neq \beta$.
    \begin{itemize}
        \item \textbf{In $v_1$ (Graph 1):} Assign feature $\alpha$ to all nodes in orbit $O_A$ and $\beta$ to all nodes in orbit $O_B$.
        \item \textbf{In $v_2$ (Graph 2):} Swap the assignment. Assign $\beta$ to $O_A$ and $\alpha$ to $O_B$.
    \end{itemize}
    \item \textbf{MPNN Failure:} The multiset of neighbor features for both roots is identical: $\{\underbrace{\alpha, \dots, \alpha}_{k}, \underbrace{\beta, \dots, \beta}_{k}\}$. Since standard MPNNs use permutation-invariant aggregation (e.g., sum) over the \emph{entire} neighborhood, the aggregated message is identical:
    \[ \sum_{u \in \mathcal{N}(v_1)} \mathbf{x}^{(0)}_u = k\alpha + k\beta = \sum_{w \in \mathcal{N}(v_2)} \mathbf{x}^{(0)}_w \]
    Thus, standard MPNNs assign identical embeddings $\mathbf{x}_{v_1}^{(L)} = \mathbf{x}_{v_2}^{(L)}$.
    \item \textbf{Our Distinction via $\mathbf{Z}_v$:} Our feature encoding $\mathbf{Z}_v$ concatenates aggregated features according to the canonical orbit ordering $\mathcal{O}(S_v)$.
    \[ \mathbf{Z}_{v_1} = \text{CONCAT}( \dots, \underbrace{\alpha}_{\text{agg}(O_A)}, \dots, \underbrace{\beta}_{\text{agg}(O_B)} ) \]
    \[ \mathbf{Z}_{v_2} = \text{CONCAT}( \dots, \underbrace{\beta}_{\text{agg}(O_A)}, \dots, \underbrace{\alpha}_{\text{agg}(O_B)} ) \]
    Since $\alpha \neq \beta$ and the concatenation order is fixed by the orbit index ($A$ then $B$), the vectors differ: $\mathbf{Z}_{v_1} \neq \mathbf{Z}_{v_2}$.
    Consequently, the learnable function $f_\theta$ maps these distinct vectors to distinct leaf states, yielding $q_1 \neq q_2$. The resulting predicates $p_1 = (h, q_1)$ and $p_2 = (h, q_2)$ are distinct, proving expressiveness strictly greater than 1-WL/MPNNs.
\end{enumerate}
\end{proof}

\subsection{Proof of Proposition \ref{prop:global_expressiveness} (Global Expressiveness)}
\label{app:proof_global}

\textbf{Proposition.} \textit{The Predicate Count Vector $\mathbf{v}_G$ is strictly more expressive than standard global aggregation functions (e.g., Mean, Max, or Min pooling), particularly in its ability to distinguish graphs based on substructure multiplicity.}

\begin{proof}
We prove this by construction. We identify a pair of graphs $G_1$ and $G_2$ that are distinguishable by the Predicate Count Vector $\mathbf{v}_G$ but indistinguishable by standard Mean, Max, or Min pooling GNNs.

Let $\mathcal{T}$ be a specific subgraph (motif) with node feature set $\mathbf{X}_\mathcal{T}$.
\begin{itemize}
    \item Let $G_1$ be a graph consisting of a single instance of $\mathcal{T}$.
    \item Let $G_2$ be a graph consisting of $k$ disjoint copies of $\mathcal{T}$ (where $k > 1$).
\end{itemize}
Assume a standard GNN where the node embedding for any node $u \in \mathcal{T}$ converges to $\mathbf{x}_u$.

\paragraph{Case 1: Standard Global Pooling Failure.}
\begin{itemize}
    \item \textbf{Max Pooling:} The readout function is $\mathbf{x}_G = \max_{v \in V} \mathbf{x}_v$.
    \[
    \mathbf{x}_{G_2} = \max(\underbrace{\{\mathbf{x}_u\}_{u \in \mathcal{T}} \cup \dots \cup \{\mathbf{x}_u\}_{u \in \mathcal{T}}}_{k \text{ times}}) = \max_{u \in \mathcal{T}} \mathbf{x}_u = \mathbf{x}_{G_1}
    \]
    Since $\mathbf{x}_{G_1} = \mathbf{x}_{G_2}$, the classifier cannot distinguish $G_1$ from $G_2$.
    
    \item \textbf{Min Pooling:} Similarly, the readout function is $\mathbf{x}_G = \min_{v \in V} \mathbf{x}_v$. Since the minimum operation is idempotent ($\min(A \cup A) = \min(A)$):
    \[
    \mathbf{x}_{G_2} = \min(\{\mathbf{x}_u\}_{u \in \mathcal{T}} \cup \dots) = \min_{u \in \mathcal{T}} \mathbf{x}_u = \mathbf{x}_{G_1}
    \]
    The graphs remain indistinguishable.

    \item \textbf{Mean Pooling:} The readout function is $\mathbf{x}_G = \frac{1}{|V|} \sum_{v \in V} \mathbf{x}_v$. Let $\mathbf{x}_{\Sigma} = \sum_{u \in \mathcal{T}} \mathbf{x}_u$ be the sum of embeddings in one motif instance. The embedding for $G_2$ is:
    \[
        \mathbf{x}_{G_2} = \frac{1}{k|V_\mathcal{T}|} \sum_{i=1}^k \mathbf{x}_{\Sigma} = \frac{k \cdot \mathbf{x}_{\Sigma}}{k \cdot |V_\mathcal{T}|} = \frac{\mathbf{x}_{\Sigma}}{|V_\mathcal{T}|} = \mathbf{x}_{G_1}
    \]
    Again, $\mathbf{x}_{G_1} = \mathbf{x}_{G_2}$, resulting in indistinguishability.
\end{itemize}

\paragraph{Case 2: Predicate Count Vector Success.}
Let $p_j \in \mathcal{P}$ be a predicate that evaluates to True for a specific node role within the motif $\mathcal{T}$. Let $c_j \ge 1$ be the number of times this predicate is triggered in a single instance of $\mathcal{T}$.

\begin{itemize}
    \item For $G_1$ (single motif), the global count is $C_j(G_1) = c_j$.
    \item For $G_2$ ($k$ disjoint copies), the global count scales linearly: $C_j(G_2) = k \cdot c_j$.
\end{itemize}

Since $k > 1$ and $c_j \ge 1$, it follows that $c_j \neq k \cdot c_j$, and thus $C_j(G_1) \neq C_j(G_2)$. Therefore, a classifier $F_{\phi}$ operating on $\mathbf{v}_G$ (e.g., a simple threshold rule $C_j(G) > c_j$) can trivially distinguish the two graphs.

Thus, the Predicate Count Vector is strictly more expressive regarding substructure multiplicity.
\end{proof}

\section{Examples with Guidance on Reading Logic Rules Induced by Local Structural Predicates}
\label{app:examples_read}

We provide a detailed example of the explanations generated by \textsc{SymGraph} for a single predicate, $p_{1}$, derived from the BBBP dataset. Figure~\ref{fig:ground_example} illustrates the grounding of symbolic rules into a molecular substructure. To enhance interpretability, our framework partitions the local neighborhood into distinct orbits based on topological roles, each labeled and color-coded to signify its structural contribution to the model's decision. This not only improve the experssivenss poweer during training but also allows fine-grained intrepabloty. Unlike canonical subgraph-based explanations that treat a motif as a single entity, the extracted predicate identifies specific node and edge constraints, providing superior semantic granularity.

The induced grounding rules are expressed as a conjunction of conditions over these orbits. To interpret the rule visualized in Figure~\ref{fig:ground_example}, consider the following components:

\begin{itemize}
    \item \textbf{Logic Rule Composition}: The explanation for $p_1$ is defined by the conjunction of conditions on a node orbit and an edge orbit: $(\text{Orbit 0: } \#\text{O} = 1) \wedge (\text{Orbit 1: } \#(\text{C-C}) \geq 2)$.
    
    \item \textbf{Node Orbit (Orbit 0)}: Orbit 0, highlighted in red, identifies a specific central node. The condition \verb|'#O = 1'| mandates that this topological position must be occupied by exactly one Oxygen (O) atom.
    
    \item \textbf{Edge Orbit (Orbit 1)}: Orbit 1 consists of the four teal edges surrounding the center. The condition \verb|'#(C-C) >= 2'| specifies that at least two of these structural bonds must be Carbon--Carbon (C-C) connections.
    
    \item \textbf{Structural Specificity}: A rule is satisfied only when all conjunctive conditions are met simultaneously. In this instance, the model identifies the presence of a specific Oxygen-centered motif supported by a local carbon framework as the discriminative feature for the prediction.
\end{itemize}

\begin{figure*}[!t]
    \centering
    \includegraphics[width=0.95\linewidth]{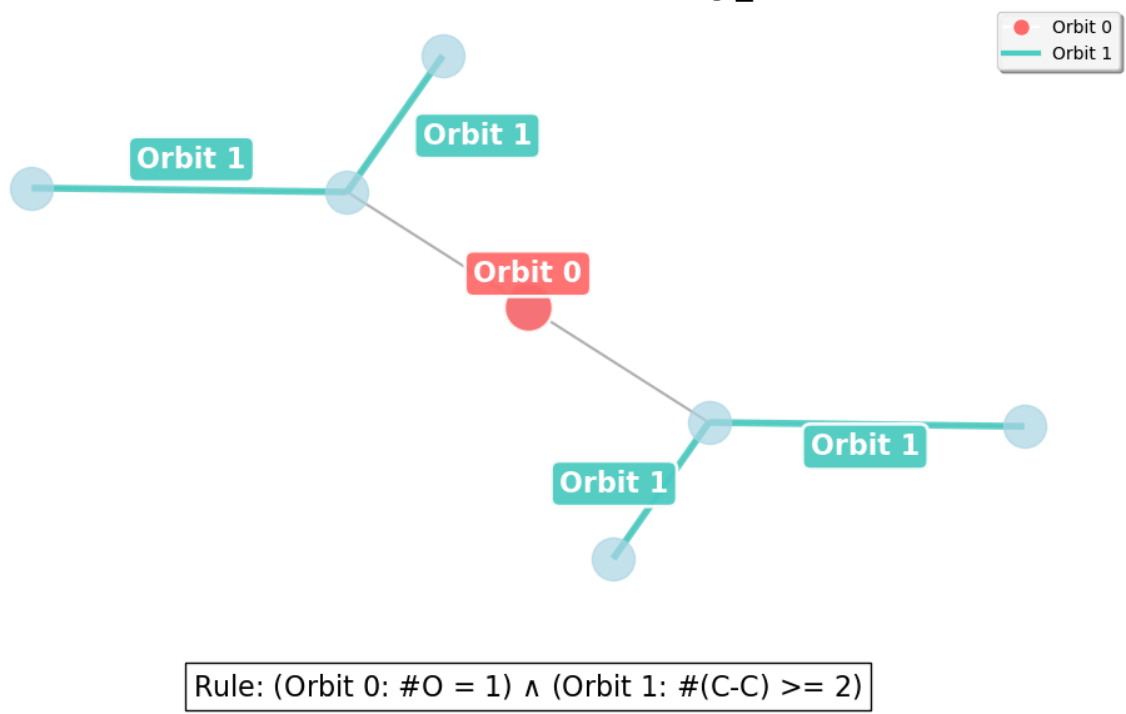}
    \caption{Visualization of logic rules induced by local structural predicates. The symbolic rule $(\text{Orbit 0: } \#\text{O} = 1) \wedge (\text{Orbit 1: } \#(\text{C-C}) \geq 2)$ is grounded to a specific molecular substructure. By decomposing the motif into Orbit 0 (node constraint) and Orbit 1 (edge constraint), \textsc{SymGraph} achieves a level of semantic granularity that exceeds traditional coarse-grained subgraph explanations.}
    \label{fig:ground_example}
\end{figure*}

\section{Scientific Relevance and Alignment with SMARTS Patterns}
\label{app:smarts_details}

While various explanation paradigms offer insights into GNN behavior, they operate at different levels of granularity. This section elaborates on why \textsc{SymGraph}'s fine-grained rule-based explanations are uniquely suited for scientific domains, where interpretability must align with rigorous structural constraints.

In chemistry and pharmacology, experts typically reason via structural formalisms—most notably SMARTS-style substructure patterns \cite{weininger1988smiles, daylight2019smarts}—rather than through isolated examples in Structure–Activity Relationships (SARs). Unlike canonical subgraph-based methods that provide only coarse-grained visualizations, our rules describe general conditions applicable across diverse molecular families and identify the specific functional groups that drive SARs. Table~\ref{tab:smarts_comparison} summarizes the conceptual alignment between the industry-standard SMARTS patterns and our framework.

\paragraph{Implementation and Code Release.} Notably, our framework extracts explicit node and edge constraints directly from local structural predicates. By parsing these constraints via \texttt{rdkit.Chem}, we enable a direct mapping between symbolic rules and standardized chemical representations. To provide seamless utility for the broader scientific community, we are currently expanding our \texttt{RDKit} integration and enhancing system stability. This development is supported by active collaboration with domain experts, whose consistent feedback ensures the interface remains practically relevant for real-world biochemical applications. The full implementation and documentation will be released shortly.

% Notably, our current implementation extracts explicit node and edge constraints directly from local structural predicates. These constraints are parsed using \texttt{rdkit.Chem}, allowing for a direct mapping between symbolic rules and standardized chemical representations. We are currently enhancing the stability of the framework and expanding our integration with the RDKit suite to support a broader range of biochemical applications and provide seamless utility for domain researchers. Furthermore, we are currently collaborating with domain experts and have consistently incorporated their feedback to refine this interface and ensure its practical relevance. The full implementation and documentation will be released to the community shortly.

% While a full-scale discovery of novel chemical properties is beyond the scope of this study, our ongoing work applies \textsc{SymGraph} to complex frontier biochemistry datasets. We aim to examine its capacity to uncover novel structure--activity relationships, positioning our framework as a bridge between methodological GNN advances and practical scientific insight.

\begin{table}[h]
    \centering
    \caption{Conceptual correspondence between chemical SMARTS patterns and \textsc{SymGraph}'s rules.}
    \label{tab:smarts_comparison}
    \renewcommand{\arraystretch}{1.3} 
    \resizebox{\textwidth}{!}{
    \begin{tabular}{@{}l p{0.45\textwidth} p{0.45\textwidth}@{}}
        \toprule
        \textbf{Aspect} & \textbf{SMARTS Rules (Chemical Standard)} & \textbf{\textsc{SymGraph}'s Rules} \\ \midrule
        \textbf{Structural Constraints} & Atom and bond constraints at specific positions (e.g., \texttt{[O]}, \texttt{[C]-[O]}) & Orbit-wise node and edge constraints (e.g., $\#\text{O} = 1$, $\#(\text{C,C}) \ge 2$) \\
        \textbf{Topological Roles} & Defined by bond positions and connectivity & Orbit indices encoding structural symmetry via hashing \\
        \textbf{Logical Flexibility} & Logical \texttt{OR} over pattern variants & Disjunctive Normal Form (DNF) clauses \\
        \textbf{Compositionality} & Recursive conjunctions of conditions & Conjunctions of structural predicates \\
        \textbf{Generalization} & Matches molecules not present in training data & Covers structural families beyond observed motifs \\
        \textbf{Implementation} & Hand-crafted or mined patterns & Automatically induced via RDKit-compatible predicates \\ \bottomrule
    \end{tabular}
    }
\end{table}

\section{Validation on Synthetic Benchmark.}
\label{sec:validation_synthetic}

% \paragraph{Validation on Synthetic Benchmark.}
% In this section, we provide a comprehensive evaluation of the logical explanations generated by \textsc{SymGraph}. We report results for the BA2Motifs, BAMultiShapes, BBBP, and Mutagenicity datasets to demonstrate the interpretability of our method across diverse graph structures. We exclude the NCI1 dataset from this analysis because the mapping between feature indices and atom types is unavailable, preventing the translation of extracted rules into human-readable insights.

We validate our method on the BAMultiShapes dataset, a synthetic benchmark where Class 1 is defined by the ground-truth logic $(H \land W) \lor (H \land G) \lor (W \land G) \Rightarrow \text{Class 1}$, with $H$, $W$, and $G$ denoting House, Wheel, and Grid motifs, respectively. Despite substantial structural noise, our approach accurately recovers these compositional rules (see Figure \ref{fig:BASHAPE}). Due to space constraints, we present a decision tree (DT) of depth 5 and extract its rules for this analysis. By mapping the learned predicates to ground-truth motifs, identifying $\{p_{288}, p_{1104}, p_{1163}\}$ as $H$, $\{p_{576}, p_{824}, p_{712}\}$ as $W$, and $\{p_{594}, p_{1378}\}$ as $G$, we verify that extracted clauses such as $(p_{1378} \land p_{824})$ correspond to $(G \land W)$. Furthermore, our method successfully distinguishes signal motifs from background BA noise (e.g., $p_{378}, p_{1202}$). We note that the full, exact rules can be recovered when the depth is set to 10. This demonstrates robust rule learning capabilities superior to state-of-the-art post-hoc rule-based explainers like \textsc{GraphTrail} \cite{GraphTrail}, which fail to recover comparable logical structures.

While the SOTA self-explainable rule-based GNN, \textsc{LogiXGIN} \cite{LogiXGIN}, claims to identify exact rules, it relies on Graded Modal Logic (GML) \cite{DBLP:journals/sLogica/Rijke00} to aggregate neighborhood degree distributions. Although this enables broader generalization, it risks ``structural hallucination'', i.e., misclassifying trees as houses based solely on degree sequences. Therefore, generating the rules reported in \textsc{LogiXGIN} requires a human-guided induction and summarization phase. Consequently, without known ground truth, the method is often unable to correctly identify the underlying structure due to the inherent limitations of GML.

We similarly validate our approach on the BA2Motifs benchmark, where graphs are distinguished by a House or Cycle motif. As shown in Figure \ref{fig:BAMOTIF}, our method correctly isolates the House motif ($p_{345}$) and recovers the exact minimal logic $(p_{345}) \Rightarrow \text{Class 0}$. This demonstrates robust rule learning capabilities superior to state-of-the-art baselines like GraphTrail \cite{GraphTrail}, which fail to recover comparable logical structures.

\begin{minipage}[t]{0.48\textwidth}
\textbf{BAMultiShapes Class 0 Rules:} (Depth = 5)
\begin{align*}
&(\neg p_{1104} \land \neg p_{1378}) \\
&\lor (\neg p_{1104} \land p_{1378} \land \neg p_{824} \land \neg p_{576} \land \neg p_{288}) \\
&\lor (p_{1104} \land p_{712} \land p_{1378}) \\
&\lor (\neg p_{1104} \land p_{1378} \land p_{824} \land p_{288}) \\
&\lor (\neg p_{1104} \land p_{1378} \land \neg p_{824} \land p_{576} \land p_{288}) \\
&\lor (\neg p_{1104} \land p_{1378} \land p_{824} \land \neg p_{288} \land p_{1163}) \\
&\lor (p_{1104} \land \neg p_{712} \land \neg p_{1202} \land \neg p_{594} \land p_{378}) \\
&\lor (p_{1104} \land \neg p_{712} \land \neg p_{1202} \land p_{594}) \\
&\lor (p_{1104} \land \neg p_{712} \land p_{1202}) \\
&\Rightarrow \text{BAMultiShapes Class 0}
\end{align*}
\end{minipage}
\hfill
\begin{minipage}[t]{0.48\textwidth}
\textbf{BAMultiShapes Class 1 Rules:} (Depth = 5)
\begin{align*}
&(p_{1104} \land \neg p_{712} \land \neg p_{1202} \land \neg p_{594} \land \neg p_{378}) \\
&\lor (\neg p_{1104} \land p_{1378} \land p_{824} \land \neg p_{288} \land \neg p_{1163}) \\
&\lor (\neg p_{1104} \land p_{1378} \land \neg p_{824} \land p_{1163} \land \neg p_{288}) \\
&\Rightarrow \text{BAMultiShapes Class 1}
\end{align*}
\end{minipage}

\begin{minipage}[t]{0.48\textwidth}
\textbf{BA2Motifs Class 0 Rules:} (Depth = 5)
\begin{align*}
&(p_{345}) \Rightarrow \text{BA2Motifs Class 0}
\end{align*}
\end{minipage}
\hfill
\begin{minipage}[t]{0.48\textwidth}
\textbf{BA2Motifs Class 1 Rules:} (Depth = 5)
\begin{align*}
&(\neg p_{345}) \Rightarrow \text{BA2Motifs Class 1}
\end{align*}
\end{minipage}

% \vspace{10pt}

% \textbf{BBBP Dataset Classification Rules} (Depth = 3)
% \begin{flalign*}
% &(\neg p_{38} \land \neg p_{5} \land p_{31}) \lor (\neg p_{38} \land p_{5} \land \neg p_{59}) \lor (\neg p_{38} \land p_{5} \land p_{59}) \lor (p_{38} \land \neg p_{61} \land p_{20}) &&\\
% &\lor (p_{38} \land p_{61} \land \neg p_{54}) \lor (p_{38} \land p_{61} \land p_{54}) \Rightarrow \text{BBBP Class 0}; &&\\[0.5em]
% &(\neg p_{38} \land \neg p_{5} \land \neg p_{31}) \lor (p_{38} \land \neg p_{61} \land \neg p_{20}) \Rightarrow \text{BBBP Class 1} &&
% \end{flalign*}

% \vspace{10pt}

% \textbf{Mutagenicity Dataset Classification Rules} (Depth = 3)
% \begin{flalign*}
% &(\neg p_{2} \land p_{20} \land \neg p_{66}) \lor (p_{2} \land \neg p_{38} \land \neg p_{64}) \lor (p_{2} \land p_{38} \land p_{17}) \Rightarrow \text{Mutagenicity Class 0}; &&\\[0.5em]
% &(\neg p_{2} \land \neg p_{20} \land \neg p_{1}) \lor (\neg p_{2} \land \neg p_{20} \land p_{1}) \lor (\neg p_{2} \land p_{20} \land p_{66}) \lor (p_{2} \land \neg p_{38} \land p_{64}) &&\\
% &\lor (p_{2} \land p_{38} \land \neg p_{17}) \Rightarrow \text{Mutagenicity Class 1} &&
% \end{flalign*}

\begin{figure}[t!]
    \centering
    \includegraphics[width=\textwidth]{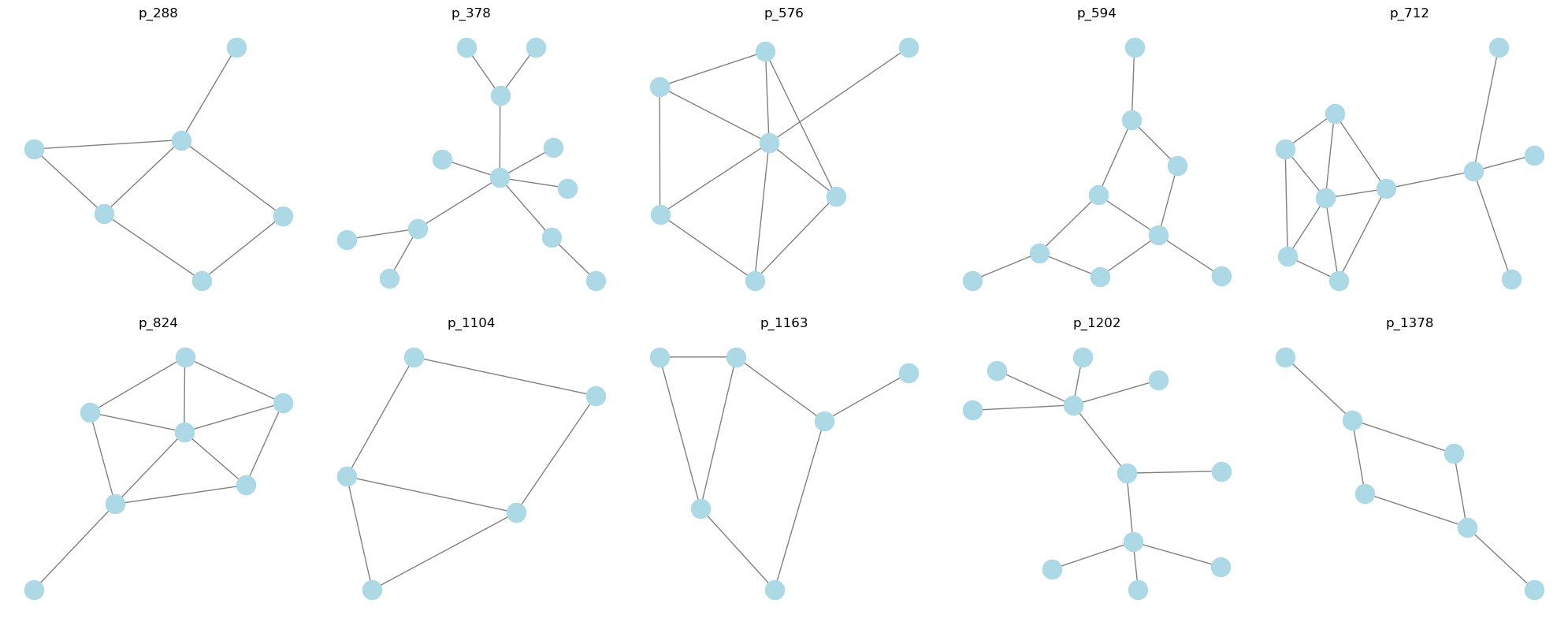}
    \caption{Visualization of the structural predicates identified by \textsc{SymGraph} for the BAMultiShapes dataset.}
    \label{fig:BASHAPE}
\end{figure}

\begin{figure}[t!]
    \centering
    \includegraphics[width=0.4\textwidth]{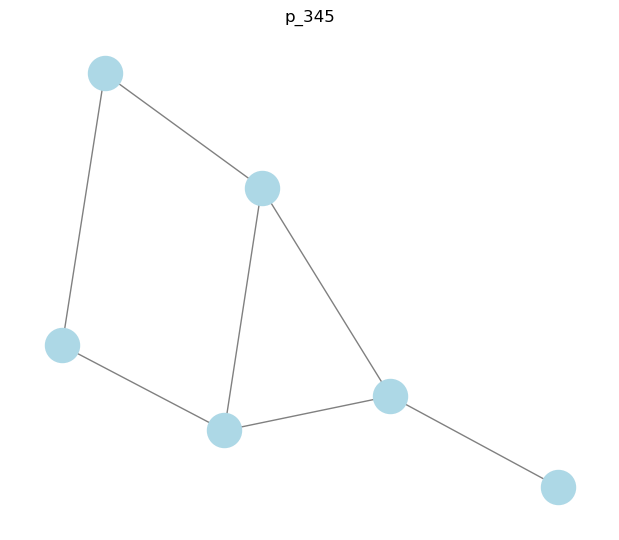}
    \caption{Visualization of the structural predicates identified by \textsc{SymGraph} for the BA2Motifs dataset.}
    \label{fig:BAMOTIF}
\end{figure}

% \begin{figure}[t!]
%     \centering
%     \includegraphics[width=\textwidth]{figures/bbbp_big_0917.pdf}
%     \caption{Our approach's grounded explanation (\textsc{SymGraph}) for BBBP.}
%     \label{fig:BBBP}
%     \vspace{-10pt}
% \end{figure}

% \begin{figure}[t!]
%     \centering
%     \includegraphics[width=\textwidth]{figures/mutag_big_0917.pdf}
%     \caption{Our approach's grounded explanation (\textsc{SymGraph}) for Mutagenicity.}
%     \label{fig:MUTAG}
%     \vspace{-10pt}
% \end{figure}

%%%%%%%%%%%%%%%%%%%%%%%%%%%%%%%%%%%%%%%%%%%%%%%%%%%%%%%%%%%%%%%%%%%%%%%%%%%%%%%
%%%%%%%%%%%%%%%%%%%%%%%%%%%%%%%%%%%%%%%%%%%%%%%%%%%%%%%%%%%%%%%%%%%%%%%%%%%%%%%

\end{document}